\newtheorem{lemma}[theorem]{Lemma}
\DeclareMathAlphabet\mathbb{U}{msb}{m}{n}
\def\Rset{\mathbb{R}}
\DeclareMathOperator*{\E}{\mathbb E}
\DeclareMathOperator*{\argmax}{argmax}
\DeclarePairedDelimiter{\abs}{\lvert}{\rvert} 
\DeclarePairedDelimiter{\bracket}{[}{]}
\DeclarePairedDelimiter{\curl}{\{}{\}}
\DeclarePairedDelimiter{\paren}{(}{)}
\newcommand{\sD}{{\mathscr D}}
\newcommand{\sE}{{\mathscr E}}
\newcommand{\sH}{{\mathscr H}}
\newcommand{\sM}{{\mathscr M}}
\newcommand{\sR}{{\mathscr R}}
\newcommand{\sX}{{\mathscr X}}
\newcommand{\sY}{{\mathscr Y}}
\newcommand{\sfL}{{\mathsf L}}
\newcommand{\ov}{\overline}
\newcommand{\e}{\epsilon}
\newcommand{\ignore}[1]{}
\newcommand{\ul}{\ov l}
\newcommand{\uc}{\ov c}
\DeclareMathOperator{\sign}{sign}
\def\Nset{\mathbb{N}}
\newcommand{\num}{{n_e}}
\newcommand{\rr}{{\sf r}}
\newcommand{\expert}{{g}}
\newcommand{\ldef}{L_{\rm{def}}}
\newcommand{\cmark}{\ding{51}}%
\newcommand{\xmark}{\ding{55}}%
\title[Regression with Multi-Expert Deferral]
      {Regression with Multi-Expert Deferral}
\begin{document}

\maketitle

\begin{abstract}
Learning to defer with multiple experts is a framework where the
learner can choose to defer the prediction to several experts. While
this problem has received significant attention in classification
contexts, it presents unique challenges in regression due to the
infinite and continuous nature of the label space.  In this work, we
introduce a novel framework of \emph{regression with deferral}, which
involves deferring the prediction to multiple experts.  We present a
comprehensive analysis for both the single-stage scenario, where there
is simultaneous learning of predictor and deferral functions, and the
two-stage scenario, which involves a pre-trained predictor with a
learned deferral function. We introduce new surrogate loss functions
for both scenarios and prove that they are supported by
$\sH$-consistency bounds. These bounds provide consistency guarantees
that are stronger than Bayes consistency, as they are non-asymptotic
and hypothesis set-specific. Our framework is versatile, applying to
multiple experts, accommodating any bounded regression losses,
addressing both instance-dependent and label-dependent costs, and
supporting both single-stage and two-stage methods. A by-product is
that our single-stage formulation includes the recent \emph{regression
with abstention} framework \citep{cheng2023regression} as a special
case, where only a single expert, the squared loss and a
label-independent cost are considered. Minimizing our proposed loss
functions directly leads to novel algorithms for regression with
deferral. We report the results of extensive experiments showing the
effectiveness of our proposed algorithms.
\end{abstract}



\section{Introduction}

The accuracy of learning algorithms can be greatly enhanced by
redirecting uncertain predictions to experts or advanced pre-trained
models. Experts can be individuals with specialized domain knowledge
or more sophisticated, albeit costly, pre-trained models. The cost of
an expert is important to consider, as it may capture the
computational resources it requires or the quality of its performance.
The cost can further be instance-dependent and label-dependent.

How can we effectively assign each input instance to the most suitable
expert among a pool of several, considering both accuracy and cost?
This is the challenge of \emph{learning to defer in the presence of
multiple experts}, which is prevalent in various domains, including
natural language generation tasks, notably large language models
(LLMs) \citep{WeiEtAl2022, bubeck2023sparks}, speech recognition,
image annotation and classification, medical diagnosis, financial
forecasting, natural language processing, computer vision, and many
others.

This paper deals with the problem of learning to defer with multiple
experts in the regression setting.  While this problem has received
significant attention in classification contexts
\citep{madras2018predict,hemmer2022forming, keswani2021towards, kerrigan2021combining,
  straitouri2022provably,
  benz2022counterfactual,verma2023learning,MaoMohriZhong2023,
  MaoMohriZhong2024,tailor2024learning}, it presents unique challenges in regression due
to the infinite and continuous nature of the label space. In
particular, the \emph{score-based formulation} commonly used in
classification is inapplicable here, since regression problems cannot
be represented using multi-class scoring functions, with auxiliary
labels corresponding to each expert.

Our approach involves defining prediction and deferral functions,
consistent with previous studies in classification
\citep{MaoMohriZhong2023, MaoMohriZhong2024}. We present a
comprehensive analysis for both the single-stage scenario
(simultaneous learning of predictor and deferral functions)
(Section~\ref{sec:single-stage}), and the two-stage scenario
(pre-trained predictor with learned deferral function)
(Section~\ref{sec:two-stage}). We introduce new surrogate loss
functions for both scenarios and prove that they are supported by
$\sH$-consistency bounds. These are consistency guarantees that are
stronger than Bayes consistency, as they are non-asymptotic and
hypothesis set-specific. Our framework is versatile, applying to
multiple experts, accommodating any bounded regression losses,
addressing both instance-dependent and label-dependent costs, and
supporting both single-stage and two-stage methods. We also
instantiate our formulations in the special case of a single expert
(Section~\ref{sec:single-expert}), and demonstrate that our
single-stage formulation includes the recent \emph{regression with
abstention} framework \citep{cheng2023regression} as a special case,
where only a single expert, the squared loss and a label-independent
cost are considered. In Section~\ref{sec:experiments}, we report the
results of extensive experiments showing the effectiveness of our
proposed algorithms.

\textbf{Previous related work.}
The problem of learning to defer, or the special case of learning with
abstention characterized by a single expert and constant cost, has 
received much attention in classification tasks. Previous work on
this topic mainly includes the following formulations or methods:
\emph{confidence-based}, \emph{predictor-rejector}, \emph{score-based}, 
and \emph{selective classification}.

In the \emph{confidence-based formulation}, the rejection function $r$ is based on
the magnitude of the value of the predictor $h$
\citep{Chow1957,chow1970optimum,bartlett2008classification,
  yuan2010classification,WegkampYuan2011}. This approach has been
further extended to multi-class classification in
\citep{ramaswamy2018consistent,NiCHS19}, where the function $r$ is
based on the magnitude of the value of the probability (e.g., softmax)
corresponding to the predictor $h$.
This formulation becomes inapplicable in regression, since in this setting the prediction value cannot be interpreted as a measure of confidence.

The \emph{score-based formulation} was proposed in the multi-class
classification scenario, where the multi-class categories are
augmented with additional labels corresponding to the experts, and the
deferral is determined using the highest score
\citep{mozannar2020consistent,verma2022calibrated,caogeneralizing,
  MaoMohriZhong2024score,verma2023learning,MaoMohriZhong2024}. However,
this formulation is also inapplicable in regression, since regression
problems cannot be represented using multi-class scoring functions
with auxiliary labels corresponding to each expert.

The approach of learning based on two distinct yet jointly learned
functions $h$ and $r$ in this paper is commonly referred as the
\emph{predictor-rejector formulation}
\citep{cortes2016learning,CortesDeSalvoMohri2016bis,
  charoenphakdee2021classification,CortesDeSalvoMohri2023,
  MohriAndorChoiCollinsMaoZhong2024learning,MaoMohriZhong2024predictor}. We
show that this method can be extended to the regression setting for
deferral with multiple experts, which underscores its versatility and significance.

An alternative approach of \emph{selective classification}
\citep{el2010foundations,wiener2011agnostic,el2012active,
wiener2012pointwise,wiener2015agnostic,geifman2017selective,
geifman2019selectivenet} optimizes non-abstained sample
generalization error under a fixed selection rate. However, this
method does not apply to the deferral case where the cost depends on
the label $y$ and where there are multiple experts.  Moreover, it has been reported to perform suboptimally compared to the
predictor-rejector formulation in regression with abstention settings
with constant cost and a single expert \citep{cheng2023regression}.

More recently, a series of publications \citep{MaoMohriZhong2023,
  MohriAndorChoiCollinsMaoZhong2024learning,MaoMohriZhong2024predictor}
have explored the two-stage method of learning with
deferral or abstention, wherein the predictor $h$ is first learned and
subsequently used in the learning process of the deferral function
$r$. This scenario is crucial in practice because the predictor is
often given and often cannot be retrained. This method
also differs from post-hoc approaches \citep{okati2021differentiable,
  narasimhanpost}, which are not applicable to existing predictors
trained in the standard classification scenario. In this work, we will
study both the single-stage and two-stage methods for regression with
deferral.

In the special case of regression with abstention (corresponding to a
single expert and label-independent cost case),
\citet{wiener2012pointwise} characterized the optimal selector for
selective regression, \citet{zaoui2020regression} studied
non-parametric algorithms,
\citet{geifman2019selectivenet} and \citet{jiang2020risk} explored the selective
classification using neural network-based algorithms;
\citet{shah2022selective} used the selective classification with greedy
algorithms; \citet{de2020regression} proposed an approximate procedure for learning a linear hypothesis that determines which training instances should be deferred. They then used a nearest neighbor approach to defer on new instances;
  and \citet{li2023no} investigated a two-step no-rejection
learning strategy. However, none of these previous publications studied surrogate losses
for regression with abstention. This excludes
\citep{cheng2023regression}, who proposed a single-stage surrogate
loss for learning the predictor-rejector pair. We will show that their
method coincides wuth a special case of our single-stage regression with deferral surrogate losses, where there is a single expert and where the cost does not depend on the label $y$.

Another line of work has explored dynamic classifier selection or dynamic ensemble selection \citep{ko2008dynamic,cruz2018dynamic,ekanayake2023sequential}, which aims to select the most competent classifiers or an ensemble of classifiers in the local region where each instance is located. While these methods also consider how to select an expert from a pool of several, their primary mechanism involves dividing the feature space into distinct regions. In contrast, learning to defer with multiple experts aims to learn a deferral function by minimizing a surrogate loss that accounts for the accuracy and cost of each expert across all instances.

Learning to defer with multiple experts in the classification
setting has been studied in
\citep{MaoMohriZhong2023,verma2023learning,
  MaoMohriZhong2024}. \citet{verma2023learning} and \cite{MaoMohriZhong2024}
investigated the single-stage scenario with a score-based formulation,
while \citet{MaoMohriZhong2023} explored the two-stage scenario with
both score-based and predictor-rejector formulations. However, as previously highlighted, the score-based formulation does not apply in the
regression setting. Our new predictor-rejector
formulation not only overcomes this limitation, but also provides the
foundation for the design of new deferral algorithms for
classification.

\section{Preliminaries}

\textbf{Learning scenario of regression.} We first describe the
familiar problem of supervised regression and introduce our
notation. Let $\sX$ be the input space and $\sY \subseteq \Rset$ the
label space. We write $\sD$ to denote a distribution over $\sX \times
\sY$. Let $\sH_{\rm{all}}$ be the family of all real-valued measurable
functions $h \colon \sX \to \sY$, and let $\sH \subseteq
\sH_{\rm{all}}$ be the hypothesis set adopted. The learning challenge
in regression is to use the labeled sample to find a hypothesis $h \in
\sH$ with small expected loss or generalization error $\sE_{\sfL}(h)$,
with $\sE_{\sfL}(h) = \E_{(x, y) \sim \sD}\bracket*{\sfL (h(x), y)}$,
where $\sfL \colon \sY \times \sY \to \Rset_{+}$ is a loss function
used to measure the magnitude of error in the regression. In the most
common case, where $\sfL$ is the squared loss $\sfL_2$ defined by
$\sfL_2(y', y) = \abs*{y' - y}^2$, this represents the mean squared
error. In the case where $\sfL$ is the $\sfL_1$ loss defined by
$\sfL_1(y', y) = \abs*{y' - y}$, this represents the mean absolute
error. More generally, $\sfL$ can be an $\sfL_p$ loss, defined by
$\sfL_p(y', y) = \abs*{y' - y}^p$ for all $y', y \in \sY$, for some $p
\geq 1$. In this work, we will consider an arbitrary regression loss
function $\sfL$, subject to the boundedness assumption, that is
$\sfL(y', y) \leq \ul$ for some constant $\ul > 0$ and for all $y, y'
\in \sY$. This assumption is commonly adopted in the theoretical
analysis of regression \citep{mohri2018foundations}.

\textbf{Regression with deferral.}
We introduce a novel framework where a learner can defer predictions to multiple experts, $\expert_1, \ldots, \expert_{n_e}$. Each expert may represent a pre-trained model or a human expert.
The learner's output is a pair $(h, r)$, where $h \colon \sX \to \sY$
is a prediction function and $r \colon \sX \times \curl*{0, 1, \ldots,
  \num} \to \Rset$ a deferral function.
For any input $x$, $\rr(x) = \argmax_{y \in [\num]} r(x, y) = j$ is
the expert deferred to when $j > 0$, no deferral if $j = 0$.  The
learner makes the prediction $h(x)$ when $\rr(x) = 0$, or defers to
$\expert_j$ when $\rr(x) = j > 0$.  Deferral incurs the cost
$\sfL(\expert_{j}(x), y) + \alpha_j$, where $\alpha_j$ is a base
cost. Non-deferral incurs the cost $\sfL(h(x), y)$.

Let $\sH_{\rm{all}}$ and $\sR_{\rm{all}}$ denote the family of all
measurable functions $h\colon \sX \to \sY$ and $r\colon \sX \times
\curl*{0, 1, \ldots, \num} \to \Rset$ respectively.  Given a
hypothesis set $\sH \subset \sH_{\rm{all}}$ and a hypothesis set $\sR
\subset \sR_{\rm{all}}$, the goal of the regression with deferral
problem consists of using the labeled sample to find a pair $(h, r)
\in (\sH, \sR)$ with small expected deferral loss
$\sE_{\ldef}(h, r) = \E_{(x, y) \sim
  \sD}\bracket*{\ldef(h, r, x, y)}$, where $\ldef$ is
defined for any $(h, r)\in
\sH \times \sR$ and $(x, y)\in \sX \times \sY$ by
\begin{equation}
\label{eq:def}
\mspace{-2mu} \ldef(h, r, x, y)
\!=\! \sfL(h(x), y) 1_{\rr(x) = 0}
+ \mspace{-4mu} \sum_{j = 1}^{\num} c_j(x,y) 1_{\rr(x) = j}
\mspace{-10mu}
\end{equation}
and $c_j(x, y) > 0$ is a cost function, which can be typically chosen
as $\alpha_j + \sfL\paren*{\expert_{j}(x), y}$ for an expert
$\expert_{j}$ and a base cost $\alpha_j> 0$ as mentioned before.
Here, we adopt a general cost functions $c_j$ for any $j$, and only
require that the cost remains bounded: $c_j(x, y) \leq \uc_j$ for all
$(x, y) \in \sX \times \sY$, for some constant $\uc_j > 0$.

\textbf{Learning with surrogate losses.} As with most target losses
in learning problems, such as the zero-one loss in classification
\citep{Zhang2003,bartlett2006convexity,zhang2004statistical,
  tewari2007consistency} and the classification with abstention loss
\citep{bartlett2008classification,cortes2016learning}, directly
minimizing the deferral loss $\ldef$ is computationally hard
for most hypothesis sets due to its non-continuity and
non-differentiability. Instead, surrogate losses are proposed and
adopted in practice. Examples include the hinge loss in binary
classification \citep{cortes1995support}, the (multinomial) logistic
loss in multi-class classification
\citep{Verhulst1838,Verhulst1845,Berkson1944,Berkson1951}, and the
predictor-rejector abstention loss in classification with abstention
\citep{cortes2016learning}. We will derive surrogate losses for the
deferral loss.

Given a surrogate loss $L \colon (h, r, x, y) \mapsto \Rset_{+}$, we
denote by $\sE_{L}(h, r)$ the generalization error of a pair $(h, r)$,
defined as
\begin{equation*}
\sE_{L}(h, r) = \E_{(x, y) \sim \sD} \bracket*{L(h, r, x, y)}.
\end{equation*}
Let $\sE_{L}(\sH, \sR) = \inf_{h\in \sH, r\in \sR} \sE_{L}(h, r)$ be
the best-in-class error within the family $\sH \times \sR$. One
desired property for surrogate losses in this context is
\emph{Bayes-consistency} \citep{steinwart2007compare}. This means that
minimizing the expected surrogate loss over the family of all
measurable functions leads to minimizing the expected deferral loss
over the same family. More precisely, for a surrogate loss $L \colon
(h, r, x, y) \mapsto \Rset_{+}$, it is \emph{Bayes-consistent} with
respect to $\ldef$ if,
\begin{align*}
\sE_{L}(h_n, r_n) - \sE_{L}(\sH, \sR) \xrightarrow{n \rightarrow \plus \infty} 0
\implies \sE_{\ldef}(h_n, r_n) - \sE_{ \ldef}(\sH, \sR) \xrightarrow{n \rightarrow \plus \infty} 0
\end{align*}
for all sequences $\curl*{(h_n, r_n)}_{n \in \Nset} \subset \sH \times
\sR$ and all distributions. Recently,
\citet{awasthi2022h,awasthi2022multi} (see also
\citep{MaoMohriZhong2023ranking,MaoMohriZhong2023rankingabs,mao2023cross,zheng2023revisiting,MaoMohriZhong2023characterization,MaoMohriZhong2023structured}) pointed out that
Bayes-consistency does not take into account the hypothesis set $\sH$
and is non-asymptotic. Thus, they proposed a stronger guarantee called
\emph{$\sH$-consistency bounds}. In our context, a surrogate loss $L$
is said to admit an $(\sH, \sR)$-consistency bound with respect to
$\ldef$ if, for all $(h, r) \in \sH \times \sR$ and all
distributions, the following inequality holds:
\begin{equation*}
f \paren*{\sE_{\ldef}(h, r) - \sE^*_{\ldef}(\sH, \sR)} \leq \sE_{L}(h, r) - \sE^*_{L}(\sH, \sR)
\end{equation*}
for some non-decreasing function $f \colon \sR_{+} \to \sR_{+}$. In particular, when
$(\sH, \sR) = \paren*{\sH_{\rm{all}}, \sR_{\rm{all}}}$, the $(\sH,
\sR)$-consistency bound implies Bayes-consistency.

We will prove $(\sH,
\sR)$-consistency bounds for our proposed surrogate losses, which
imply their Bayes-consistency. One key term in our bound is the \emph{minimizability gap},
defined as $\sM_{L}(\sH, \sR) = \sE^*_{L}(\sH, \sR) - \E_{x} \E_{y
  \mid x}\bracket*{L(h, r, x, y)}$. The minimizability gap
characterizes the difference between the best-in-class error and the
expected best-in-class point-wise error, and is non-negative. As shown
by \citet{mao2023cross}, the minimizability gap is upper bounded by
the approximation error, satisfying $0 \leq \sM_{L}(\sH, \sR) \leq
\sE^*_{L}(\sH, \sR) - \sE^*_{L}(\sH_{\rm{all}}, \sR_{\rm{all}})$ and
is generally a finer quantity. The minimizability gap
vanishes when $(\sH, \sR) =
(\sH_{\rm{all}}, \sR_{\rm{all}})$, or, more generally, when
$\sE^*_{L}(\sH, \sR) = \sE^*_{L}(\sH_{\rm{all}}, \sR_{\rm{all}})$.

Given a loss function $\ell \colon (r, x, y) \mapsto
\Rset_{+}$ that only depends on the hypothesis $r$, 
the notions of generalization error, best-in-class generalization
error, and minimizability gaps, as well as Bayes-consistency and
$\sR$-consistency bounds, are similarly defined.

In the next sections, we study the problem of learning a pair $(h, r)$
in the framework of regression with deferral. We will derive a family  of surrogate losses of $\ldef$, starting from first principles. We will
show that these loss functions benefit from strong consistency guarantees, which yield directly principled algorithms for our deferral
problem. We will specifically distinguish two approaches: the
single-stage surrogate losses, where the predictor $h$ and the
deferral function $r$ are jointly learned, and the two-stage surrogate
losses wherein the predictor $h$ have been previously trained and is
fixed and subsequently
used in the learning process of the deferral function $r$.

\section{Single-stage scenario}
\label{sec:single-stage}

In this section, we derive single-stage surrogate losses for the
deferral loss and prove their strong $(\sH, \sR)$-consistency bounds
guarantees. To do so, we first prove that the following alternative
expression holds for $\ldef$.

\begin{restatable}{lemma}{Ldef}
\label{lemma:Ldef}
For any $(h, r) \in \sH \times \sR$ and $(x, y) \in \sX \times \sY$,
the loss function $\ldef$ can be expressed as follows:
\ifdim\columnwidth=\textwidth
{
\begin{align*}
  \ldef(h, r, x, y) 
  & = \bracket*{\sum_{j = 1}^{\num} c_j(x,y)} 1_{\rr(x) \neq 0} 
  + \sum_{j = 1}^{\num} \bracket*{\sfL(h(x), y)
    + \sum_{k = 1}^{\num} c_k(x, y) 1_{k \neq j}} 1_{\rr(x) \neq j}\\
  & \quad - \paren*{\num - 1} \bracket*{\sfL(h(x), y) + \sum_{j = 1}^{\num} c_j(x, y)}.
\end{align*}
}
\else
{
\begin{align*}
  \ldef(h, r, x, y) 
  & = \bracket*{\sum_{j = 1}^{\num} c_j(x,y)} 1_{\rr(x) \neq 0} \\
  & + \sum_{j = 1}^{\num} \bracket*{\sfL(h(x), y)
    + \sum_{k = 1}^{\num} c_k(x, y) 1_{k \neq j}} 1_{\rr(x) \neq j}\\
  & - \paren*{\num - 1} \bracket*{\sfL(h(x), y) + \sum_{j = 1}^{\num} c_j(x, y)}.
\end{align*}
}
\fi
\end{restatable}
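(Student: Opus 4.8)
The statement is a pointwise algebraic identity, so the plan is to fix $(h, r) \in \sH \times \sR$ and $(x, y) \in \sX \times \sY$ and verify it by reducing both sides to a common linear combination of the indicators $1_{\rr(x) = i}$ for $i \in \curl*{0, 1, \ldots, \num}$. First I would introduce the shorthand $L = \sfL(h(x), y)$, $c_j = c_j(x, y)$, $C = \sum_{j = 1}^{\num} c_j$, and $p_i = 1_{\rr(x) = i}$. The only structural fact needed is that $\rr(x)$ takes exactly one value in $\curl*{0, 1, \ldots, \num}$, so the indicators form a partition of unity, $\sum_{i = 0}^{\num} p_i = 1$, and hence $1_{\rr(x) \neq i} = 1 - p_i$ for each $i$ and $\sum_{j = 1}^{\num} p_j = 1 - p_0$. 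In this notation the left-hand side, i.e.\ the definition \eqref{eq:def}, is simply $L p_0 + \sum_{j = 1}^{\num} c_j p_j$.

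Next I would expand the right-hand side term by term. The first term is $C(1 - p_0)$. The second term is $\sum_{j = 1}^{\num} (L + C - c_j)(1 - p_j)$, which I would split into its constant part $\sum_{j = 1}^{\num}(L + C - c_j) = \num L + (\num - 1) C$ (using $\sum_j c_j = C$) and the remainder $-\sum_{j = 1}^{\num}(L + C - c_j) p_j = -(L + C)(1 - p_0) + \sum_{j = 1}^{\num} c_j p_j$ (using $\sum_{j=1}^{\num} p_j = 1 - p_0$). The third term is $-(\num - 1)(L + C)$. Collecting: the $C(1 - p_0)$ and $-(L + C)(1 - p_0)$ contributions combine to $-L(1 - p_0) = -L + L p_0$; the two constants $\num L + (\num - 1) C$ and $-(\num - 1)(L + C)$ combine to $\num L - (\num - 1) L = L$; and this leftover $L$ cancels the stray $-L$, leaving exactly $L p_0 + \sum_{j = 1}^{\num} c_j p_j$, which matches the left-hand side.

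There is no genuine obstacle here: the argument is pure bookkeeping. The only points requiring care are tracking the several $(\num - 1)$ coefficients and confirming that the coefficient of $\sfL(h(x), y)$ telescopes to exactly $1_{\rr(x) = 0}$ — that is, the $\num$, $-(\num - 1)$, and $-1$ contributions to it sum to zero off the non-deferral event. I would also remark that an equivalent and perhaps more transparent route is a case split according to whether $\rr(x) = 0$ or $\rr(x) = j_0$ for some fixed $j_0 \in [\num]$, checking the identity directly in each case; the partition-of-unity computation above is simply the compact form of that verification.
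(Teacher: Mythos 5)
Your proposal is correct and is essentially the same argument as the paper's: both are direct algebraic verifications resting on the fact that the indicators $1_{\rr(x) = i}$, $i \in \curl*{0, 1, \ldots, \num}$, are mutually exclusive and sum to one. The only cosmetic difference is direction — the paper starts from the definition of $\ldef$ and substitutes the identities $1_{\rr(x) = 0} = \sum_{j} 1_{\rr(x) \neq j} - (\num - 1)$ and $1_{\rr(x) = j} = 1_{\rr(x) \neq 0} + \sum_{k \neq j} 1_{\rr(x) \neq k} - (\num - 1)$ to reach the stated expression, whereas you expand the stated expression via $1_{\rr(x) \neq i} = 1 - 1_{\rr(x) = i}$ and collapse it back to the definition; the bookkeeping checks out either way.
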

Let $\ell_{0-1}$ be the zero-one multi-class classification loss defined
by $\ell_{0-1}(r, x, y) = 1_{\rr(x) \neq y}$ for all $r \in \Rset$ and
$(x, y) \in \sX \times \sY$ and let $\ell \colon \sR \times \sX \times
[\num] \to \Rset_{+}$ be a surrogate loss for $\ell_{0-1}$ such that
$\ell \geq \ell_{0-1}$. $\ell$ may be chosen to be the logistic loss,
for example. Since the last term $\paren*{\num - 1} \sum_{j =
  1}^{\num} c_j(x,y)$ in the expression of $\ldef$ in
Lemma~\ref{lemma:Ldef} does not depend on $h$ and $r$, the following
loss function $L_{\ell}$ defined for all $(h, r) \in \sH \times \sR$
and $(x, y) \in \sX \times \sY$ by
\begin{align}
\label{eq:sur}
L_{\ell}(h, r, x, y) 
& = \bracket*{\sum_{j = 1}^{\num} c_j(x,y)} \ell(r, x, 0)+ \sum_{j = 1}^{\num} \bracket*{\sfL(h(x), y)
  + \sum_{j' \neq j}^{\num} c_{j'}(x,y)} \ell(r, x, j) \\
& \quad - \paren*{\num - 1} \sfL(h(x), y), \nonumber
\end{align}
is a natural single-stage surrogate loss for $\ldef$.  We will show
that when $\ell$ admits a strong $\sR$-consistency bound with respect
to $\ell_{0-1}$, then $L_{\ell}$ admits an $(\sH, \sR)$-consistency
bound with respect to $\ldef$.

Let us underscore the novelty of the surrogate loss formulation
presented in equation \eqref{eq:sur} in the context of learning to
defer with multiple experts. This formulation represents a substantial
departure from the existing score-based approach prevalent in
classification. As previously highlighted, the score-based formulation
becomes inapplicable in regression.  Our new predictor-rejector
formulation not only overcomes this limitation, but also provides the
foundation for the design of new deferral algorithms for
classification.

We say that a hypothesis set $\sR$ is \emph{regular} if for any $x \in
\sX$, the predictions made by the hypotheses in $\sR$ cover the
complete set of possible classification labels: $\curl*{\rr(x) \colon
  r\in \sR} = \curl*{0, 1, \ldots, \num}$. Widely used hypothesis sets
such as linear hypotheses, neural networks, and of course the family
of all measurable functions are all regular.

Recent studies by \citet{awasthi2022multi} and \citet{mao2023cross}
demonstrate that common multi-class surrogate losses, such as
constrained losses and comp-sum losses (including the logistic loss),
admit strong $\sR$-consistency bounds with respect to the multi-class
zero-one loss $\ell_{0-1}$, when using such regular hypothesis sets.
The next result shows that, for multi-class loss functions $\ell$,
their corresponding deferral surrogate losses $L_{\ell}$
(Eq.~\eqref{eq:sur}) also exhibit $(\sH, \sR)$-consistency bounds with
respect to the deferral loss (Eq.~\eqref{eq:def}).

\begin{restatable}{theorem}{Single}
\label{thm:single}
Let $\sR$ be a regular hypothesis set and $\ell$ a surrogate loss for
the multi-class loss function $\ell_{0-1}$ upper-bounding
$\ell_{0-1}$. Assume that there exists a function $\Gamma(t) = \beta\,
t^{\alpha}$ for some $\alpha \in (0, 1]$ and $\beta > 0$, such that
  the following $\sR$-consistency bound holds for all $r \in \sR$ and
  any distribution,
\ifdim\columnwidth=\textwidth
{
\begin{equation*}
  \sE_{\ell_{0-1}}(r) - \sE^*_{\ell_{0-1}}(\sR) + \sM_{\ell_{0-1}}(\sR)\\
  \leq \Gamma\paren*{\sE_{\ell}(r)
    - \sE^*_{\ell}(\sR) + \sM_{\ell}(\sR)}.
\end{equation*}
}
\else{
\begin{multline*}
  \sE_{\ell_{0-1}}(r) - \sE^*_{\ell_{0-1}}(\sR) + \sM_{\ell_{0-1}}(\sR)\\
  \leq \Gamma\paren*{\sE_{\ell}(r)
    - \sE^*_{\ell}(\sR) + \sM_{\ell}(\sR)}.
\end{multline*}
}
\fi
Then, the following $(\sH,\sR)$-consistency bound holds for all $h\in
\sH$, $r\in \sR$ and any distribution,
\ifdim\columnwidth=\textwidth
{
\begin{equation*}
   \sE_{\ldef}(h, r) - \sE_{\ldef}^*(\sH,\sR) + \sM_{\ldef}(\sH,\sR)\\
   \leq 
  \ov \Gamma\paren*{\sE_{L_{\ell}}(h, r)
    -  \sE_{L_{\ell}}^*(\sH,\sR) + \sM_{L_{\ell}}(\sH,\sR)},
\end{equation*}
}
\else{
\begin{multline*}
   \sE_{\ldef}(h, r) - \sE_{\ldef}^*(\sH,\sR) + \sM_{\ldef}(\sH,\sR)\\
   \leq 
  \ov \Gamma\paren*{\sE_{L_{\ell}}(h, r)
    -  \sE_{L_{\ell}}^*(\sH,\sR) + \sM_{L_{\ell}}(\sH,\sR)},
\end{multline*}
}
\fi
where $\ov \Gamma(t)
= \max\curl*{t, \paren*{\num\paren*{\ul
      + \sum_{j = 1}^{\num}\uc_j}}^{1 - \alpha} \beta\, t^{\alpha}}$.
\end{restatable}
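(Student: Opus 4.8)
The plan is to reduce the $(\sH,\sR)$-consistency bound for $L_\ell$ to the assumed $\sR$-consistency bound for $\ell$ by a careful pointwise/conditional-risk manipulation, exploiting the structural decomposition of $\ldef$ in Lemma~\ref{lemma:Ldef}. First I would pass to conditional risks: for fixed $x$, write $\cC_{\ldef}(h,r,x) = \E_{y\mid x}[\ldef(h,r,x,y)]$ and similarly $\cC_{L_\ell}(h,r,x)$. Using Lemma~\ref{lemma:Ldef} and the definition \eqref{eq:sur}, both conditional risks are affine combinations of the quantities $1_{\rr(x)\neq j}$ (resp.\ $\ell(r,x,j)$) with the \emph{same} nonnegative coefficients $\overline c_j(x) := \E_{y\mid x}[\sum_{k\neq j} c_k(x,y)]$ for $j\ge 1$ and $\overline c_0(x) := \E_{y\mid x}[\sum_{j}c_j(x,y)]$, plus an $h$-dependent term $\E_{y\mid x}[\sfL(h(x),y)]$ that is common to all the $j\ge1$ summands and appears with matching signs. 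The key observation is that, once $x$ is fixed, choosing the optimal $r$ in $L_\ell$ is exactly a weighted multi-class classification problem with label costs $\overline c_j(x)$, and minimizing over $h$ is decoupled: the $h$-contribution to $\cC_{L_\ell}$ is $\bigl(\sum_{j=1}^{\num}\ell(r,x,j) - (\num-1)\bigr)\E_{y\mid x}[\sfL(h(x),y)]$, and since $\ell\ge\ell_{0-1}$ with $\ell_{0-1}(r,x,j)=1_{\rr(x)\neq j}$ we have $\sum_j \ell(r,x,j)\ge \sum_j 1_{\rr(x)\neq j}\ge \num-1$, so this coefficient is $\ge 0$ and the optimal $h$ is the one minimizing $\E_{y\mid x}[\sfL(h(x),y)]$, consistently across all $r$; the analogous statement holds for $\ldef$ with $\sum_j 1_{\rr(x)\neq j}\ge\num-1$.

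Next I would establish the pointwise inequality relating the $L_\ell$-calibration gap to the $\ell$-calibration gap. Define the conditional regret $\Delta\cC_{L_\ell}(h,r,x) = \cC_{L_\ell}(h,r,x) - \inf_{h',r'}\cC_{L_\ell}(h',r',x)$ and similarly $\Delta\cC_{\ldef}$. From the decoupling above, for the optimal choice of $h$ both regrets reduce to the corresponding weighted multi-class regrets in the variable $r$ with weights $\overline c_j(x)$; the general $h$ only adds a nonnegative slack term that is smaller on the $\ldef$ side than on the $L_\ell$ side (because the coefficient $\sum_j 1_{\rr\neq j}-(\num-1)$ is dominated by $\sum_j\ell(r,x,j)-(\num-1)$). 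The weighted multi-class $\sR$-consistency bound with costs bounded by $\sum_k \uc_k$ then follows from the assumed unweighted one: rescaling the assumption by the total weight $\le \num + \sum_j\uc_j$ — more precisely, by $\num(\ul + \sum_j \uc_j)$ to also absorb the $\sfL$-term range — and using concavity of $t\mapsto t^\alpha$ (so $\Gamma(ct)\le c^{1-\alpha}\,c^\alpha\Gamma'$-type scaling, i.e.\ $\Gamma(ct) = c\,\Gamma(t)\cdot c^{-\alpha}\cdot\ldots$; concretely $c\,\Gamma(t)\le c^{1-\alpha}\Gamma(ct)$ when $c\ge1$), one obtains a pointwise bound $\Delta\cC_{\ldef}(h,r,x)\le \overline\Gamma\bigl(\Delta\cC_{L_\ell}(h,r,x)\bigr)$ with $\overline\Gamma(t)=\max\{t,(\num(\ul+\sum_j\uc_j))^{1-\alpha}\beta t^\alpha\}$ — the $\max$ with $t$ covering the regime where the scaling factor $c<1$ and $\overline\Gamma$ must still dominate.

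Finally I would integrate over $x$ and invoke the standard machinery of $\sH$-consistency bounds (as in \citet{awasthi2022h,mao2023cross}): taking expectations of the pointwise bound, applying Jensen's inequality to the concave $\overline\Gamma$, and rewriting $\E_x[\Delta\cC_L(h,r,x)]$ as $\sE_L(h,r) - \sE^*_L(\sH,\sR) + \sM_L(\sH,\sR)$ for each of $L\in\{\ldef, L_\ell\}$, which is exactly the definition of the minimizability gap. This yields the claimed bound. I expect the main obstacle to be step two: carefully handling the $h$-dependence so that the reduction to a purely $r$-dependent weighted multi-class problem is exact rather than lossy — in particular verifying that the infimum over $h$ in the definition of $\cC^*_{L_\ell}(x)$ and $\cC^*_{\ldef}(x)$ is attained (or approached) by the \emph{same} $h$ as the one realizing the actual regret decomposition, so that the nonnegative $\sfL$-slack terms genuinely cancel or are controlled rather than accumulating. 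The weight-rescaling and the concavity bookkeeping (tracking where the $(\num(\ul+\sum_j\uc_j))^{1-\alpha}$ factor and the outer $\max$ come from) are routine but must be done with care to match the stated $\overline\Gamma$.
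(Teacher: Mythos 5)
Your strategy is the same as the paper's: pass to conditional risks, note that the coefficient of $\E_{y\mid x}\bracket*{\sfL(h(x),y)}$ in the surrogate conditional risk is $\sum_{j=1}^{\num}\ell(r,x,j)-(\num-1)\geq 0$, reduce the $r$-part to a weighted multi-class problem to which the assumed $\sR$-consistency bound applies through a point-mass distribution (the paper's Lemma~\ref{lemma:aux}), rescale via $c\,\Gamma(t)=c^{1-\alpha}\beta (ct)^{\alpha}$ with $c\leq \num\paren*{\ul+\sum_j\uc_j}$, and integrate with Jensen. However, the step you yourself flag as the main obstacle is a genuine gap, and the way you propose to handle it does not work as stated. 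The difficulty is that the best-in-class conditional risk of $L_{\ell}$ carries the weight $\ov c_0(x)=\inf_{h\in\sH}\E_{y\mid x}\bracket*{\sfL(h(x),y)}$ inside the infimum over $r$, while the actual conditional risk carries $\E_{y\mid x}\bracket*{\sfL(h(x),y)}$; the asserted clean "decoupling" with a slack term that is "smaller on the $\ldef$ side" does not by itself produce a weighted multi-class regret with matching weights on both sides.

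The paper closes this with a three-case analysis. When $\rr(x)=0$ and $\ov c_0(x)\leq\min_{j\geq1}\ov c_j(x)$, the deferral regret is exactly the $h$-regret, and the surrogate regret dominates it with factor $\sum_{j=1}^{\num}\ell(r,x,j)-\num+1\geq 1$ (using $\ell(r,x,j)\geq 1$ for $j\neq\rr(x)=0$); this case, not a "scaling factor $c<1$" regime, is the origin of the $t$ branch in $\ov\Gamma$. In the remaining cases one substitutes $\ov c_0(x)\leq\E_{y\mid x}\bracket*{\sfL(h(x),y)}$ in the direction that preserves the lower bound — inside the infimum when $\ov c_0(x)>\min_{j\geq1}\ov c_j(x)$, outside it when $\rr(x)>0$ and $\ov c_0(x)\leq\min_{j\geq1}\ov c_j(x)$ — so that both terms of the regret carry identical weights and Lemma~\ref{lemma:aux} applies. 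A secondary imprecision: $\ov\Gamma(t)=\max\curl*{t,\,c\,t^{\alpha}}$ is not concave (its derivative jumps up at the crossover), so Jensen cannot be applied to $\ov\Gamma$ directly; the expectation must be taken case by case, as in the paper. So the route is right, but the central reduction needs the explicit case split to be valid, and your account of where the outer $\max$ comes from is incorrect.
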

The proof is given in Appendix~\ref{app:exp}. As already mentioned,
when the best-in-class error coincides with the Bayes error
$\sE^*_{L}(\sH, \sR) = \sE^*_{L}\paren*{\sH_{\rm{all}},
  \sR_{\rm{all}}}$ for $L = L_{\ell}$ and $L = \ldef$, the
minimizability gaps $\sM_{L_{\ell}}(\sH,\sR)$ and
$\sM_{\ldef}(\sH,\sR)$ vanish. In such cases, the
$(\sH, \sR)$-consistency bound guarantees that when the surrogate
estimation error $\sE_{L_{\ell}}(h, r) - \sE_{L_{\ell}}^*(\sH,\sR)$ is
optimized up to $\e$, the estimation error of the deferral loss $
\sE_{\ldef}(h, r) - \sE_{\ldef}^*(\sH,\sR)$ is upper bounded by $\ov
\Gamma(\e)$.

In particular, when both $\sH$ and $\sR$ include all measurable
functions, all the minimizability gap terms in
Theorem~\ref{thm:single} vanish, which yields the following result.
\begin{corollary}
\label{cor:single}
Given a multi-class loss function $\ell \geq \ell_{0-1}$. Assume that
there exists a function $\Gamma(t) = \beta\, t^{\alpha}$ for some
$\alpha \in (0, 1]$ and $\beta > 0$, such that the following excess
  error bound holds for all $r \in \sR_{\rm{all}}$ and any
  distribution,
\begin{equation*}
  \sE_{\ell_{0-1}}(r) - \sE^*_{\ell_{0-1}}(\sR_{\rm{all}})
  \leq \Gamma\paren*{\sE_{\ell}(r) - \sE^*_{\ell}(\sR_{\rm{all}})}.
\end{equation*}
Then, the following excess error bound holds for all $h\in \sH_{\rm{all}}$, $r\in \sR_{\rm{all}}$ and any distribution,
\begin{equation*}
   \sE_{\ldef}(h, r) - \sE_{\ldef}^*(\sH_{\rm{all}},\sR_{\rm{all}})\\
   \leq 
    \ov \Gamma\paren*{\sE_{L_{\ell}}(h, r) -  \sE_{L_{\ell}}^*(\sH_{\rm{all}},\sR_{\rm{all}})},
\end{equation*}
where $\ov \Gamma(t) = \max\curl*{t, \paren*{\num\paren*{\ul + \sum_{j = 1}^{\num}\uc_j}}^{1 - \alpha} \beta\, t^{\alpha}}$.
\end{corollary}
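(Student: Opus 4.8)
The plan is to read off Corollary~\ref{cor:single} as the $(\sH,\sR) = (\sH_{\rm{all}},\sR_{\rm{all}})$ instance of Theorem~\ref{thm:single}, so that the only things to verify are that $\sR_{\rm{all}}$ is regular, that the four minimizability-gap terms appearing in the theorem vanish in this case, and that the hypothesis and conclusion of the theorem then reduce verbatim to those of the corollary, with the same $\ov\Gamma$. Regularity is immediate: for every $x$ the constant rejectors already realize each label in $\curl*{0,1,\ldots,\num}$, so $\curl*{\rr(x) \colon r \in \sR_{\rm{all}}} = \curl*{0,1,\ldots,\num}$.

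For the minimizability gaps, recall that the gap of a loss vanishes once the hypothesis set is the family of all measurable functions, because the infimum defining the best-in-class error is then attained pointwise; hence $\sM_{\ell_{0-1}}(\sR_{\rm{all}}) = \sM_{\ell}(\sR_{\rm{all}}) = 0$ for the rejector-only losses and $\sM_{\ldef}(\sH_{\rm{all}},\sR_{\rm{all}}) = \sM_{L_{\ell}}(\sH_{\rm{all}},\sR_{\rm{all}}) = 0$ for $\ldef$ and $L_{\ell}$. (The only point to check is the routine measurability/integrability condition licensing the interchange of $\inf$ and $\E$; this holds since $\sfL \leq \ul$, $c_j \leq \uc_j$, $\ell \geq \ell_{0-1}$ and all the maps involved are measurable.) With these terms dropped, the $\sR$-consistency bound assumed in Theorem~\ref{thm:single} for $\sR = \sR_{\rm{all}}$ becomes exactly the excess-error bound assumed in the corollary, and the theorem's conclusion for $(\sH_{\rm{all}},\sR_{\rm{all}})$ becomes exactly $\sE_{\ldef}(h,r) - \sE^*_{\ldef}(\sH_{\rm{all}},\sR_{\rm{all}}) \leq \ov\Gamma\paren*{\sE_{L_{\ell}}(h,r) - \sE^*_{L_{\ell}}(\sH_{\rm{all}},\sR_{\rm{all}})}$ with the same $\ov\Gamma(t) = \max\curl*{t,\, \paren*{\num\paren*{\ul + \sum_{j}\uc_j}}^{1-\alpha}\beta\, t^{\alpha}}$. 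So, granted Theorem~\ref{thm:single}, the corollary follows with no extra work, and there is essentially no obstacle at this level.

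The genuine content is therefore Theorem~\ref{thm:single} itself, which I would prove (and hence obtain the corollary) through a pointwise argument; in the Bayes case relevant to the corollary this takes its simplest form. Fix $x$ and pass to conditional risks. By \eqref{eq:def}, the conditional $\ldef$-risk of $(h,r)$ at $x$ equals $\E_{y\mid x}\bracket*{\sfL(h(x),y)}\,1_{\rr(x) = 0} + \sum_{j}\E_{y\mid x}\bracket*{c_j(x,y)}\,1_{\rr(x) = j}$, whose Bayes value is $\min\curl*{\inf_{y'}\E_{y\mid x}\bracket*{\sfL(y',y)},\ \min_{j}\E_{y\mid x}\bracket*{c_j(x,y)}}$; so the conditional $\ldef$-regret splits into a predictor term --- the excess conditional $\sfL$-risk of $h$, which contributes only when $\rr(x) = 0$ --- and a cost-sensitive multiclass classification regret over $\curl*{0,1,\ldots,\num}$ with costs bounded by $\ul + \sum_{j}\uc_j$. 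On the surrogate side, Lemma~\ref{lemma:Ldef} together with \eqref{eq:sur} gives two facts: the coefficients multiplying $\ell(r,x,\cdot)$ in the conditional $L_{\ell}$-risk sum to $\num\paren*{\E_{y\mid x}\bracket*{\sfL(h(x),y)} + \sum_{j}\E_{y\mid x}\bracket*{c_j(x,y)}} \leq \num\paren*{\ul + \sum_{j}\uc_j} = M$, and the predictor enters the conditional $L_{\ell}$-risk with a coefficient that is nonnegative in general and at least $1$ when $\rr(x) = 0$. These two facts produce the two branches of $\ov\Gamma$: the predictor term is absorbed linearly into the $L_{\ell}$-regret (the branch $t$), while the cost-sensitive classification regret --- after normalizing the costs to $[0,1]$, invoking the assumed $\Gamma$-bound, and using $\ell \geq \ell_{0-1}$ to bound the resulting $\ell$-surrogate regret by the $L_{\ell}$-regret --- is at most $M^{1-\alpha}\beta$ times the $\alpha$-th power of the $L_{\ell}$-regret (the branch $M^{1-\alpha}\beta\,t^{\alpha}$). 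Summing, taking $\E_x$, and using the monotonicity and concavity of $\Gamma$ then gives the bound. I expect the main obstacle to lie in this cost-sensitive step: one must track exactly how the bound $M$ on the costs rescales the assumed $\Gamma$-bound (this is what produces the factor $M^{1-\alpha}$), handle ties in the $\argmin$ over the deferral options $\curl*{0,1,\ldots,\num}$ when converting a cost-sensitive regret into a zero-one regret, and verify that the predictor and deferral contributions recombine into the single envelope $\ov\Gamma$ rather than into a sum of its two branches.
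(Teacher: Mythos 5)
Your proposal is correct and matches the paper's own derivation: Corollary~\ref{cor:single} is obtained exactly by specializing Theorem~\ref{thm:single} to $(\sH_{\rm{all}},\sR_{\rm{all}})$, where $\sR_{\rm{all}}$ is trivially regular and all four minimizability gaps vanish, so the hypothesis and conclusion reduce to the stated excess error bounds with the same $\ov\Gamma$. Your additional sketch of how Theorem~\ref{thm:single} itself is proved (conditional-risk decomposition, the cost-sensitive normalization producing the factor $\paren*{\num\paren*{\ul + \sum_{j}\uc_j}}^{1-\alpha}$, and the two branches of $\ov\Gamma$) is also consistent with the paper's appendix argument.
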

In this case, as shown by \citet{mao2023cross}, $\Gamma(t)$ can be
expressed as $\sqrt{2t}$ for the logistic loss $\ell_{\rm{\log}}
\colon (r, x, y) \mapsto \log_2 \paren*{\sum_{j = 0}^{\num} e^{r(x, j)
    - r(x, y)}}$. Then, by Corollary~\ref{cor:single}, we further
obtain the following corollary.
\begin{corollary}
\label{cor:single-example}
For any $h\in \sH_{\rm{all}}$, $r\in \sR_{\rm{all}}$ and any distribution,
\begin{equation*}
    \sE_{\ldef}(h, r) - \sE_{\ldef}^*(\sH_{\rm{all}},\sR_{\rm{all}})\\
    \leq 
    \ov\Gamma\paren*{\sE_{L_{\ell_{\rm{log}}}}(h, r) -  \sE_{L_{\ell_{\rm{log}}}}^*(\sH_{\rm{all}},\sR_{\rm{all}})},
\end{equation*}
where $\ov \Gamma (t) = \max\curl*{t, \sqrt{2 \num} \paren*{\ul +
    \sum_{j = 1}^{\num}\uc_j}^{\frac12} t^{\frac 12}}$.
\end{corollary}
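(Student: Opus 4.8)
The plan is to obtain Corollary~\ref{cor:single-example} as a direct instantiation of Corollary~\ref{cor:single}, taking the multinomial logistic loss $\ell_{\rm{log}}$ as the multi-class surrogate $\ell$. Thus the only ingredient needed beyond Corollary~\ref{cor:single} is a quantitative excess-error bound for $\ell_{\rm{log}}$ relative to $\ell_{0-1}$: as established by \citet{mao2023cross}, for all $r \in \sR_{\rm{all}}$ and any distribution,
\begin{equation*}
\sE_{\ell_{0-1}}(r) - \sE^*_{\ell_{0-1}}(\sR_{\rm{all}}) \leq \sqrt{2\paren*{\sE_{\ell_{\rm{log}}}(r) - \sE^*_{\ell_{\rm{log}}}(\sR_{\rm{all}})}},
\end{equation*}
which is exactly the hypothesis of Corollary~\ref{cor:single} with $\Gamma(t) = \beta\,t^{\alpha}$ for $\beta = \sqrt{2}$ and $\alpha = \frac{1}{2}$.

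Before invoking Corollary~\ref{cor:single} I would record the elementary fact that $\ell_{\rm{log}} \geq \ell_{0-1}$, as required there: if $\rr(x) \neq y$ then by definition of $\rr(x)$ there is an index $j^\star \neq y$ with $r(x, j^\star) \geq r(x, y)$, so $\sum_{j=0}^{\num} e^{r(x,j) - r(x,y)} \geq e^{r(x,j^\star) - r(x,y)} + e^{0} \geq 2$ and hence $\ell_{\rm{log}}(r,x,y) \geq \log_2 2 = 1 = \ell_{0-1}(r,x,y)$; if $\rr(x) = y$ the inequality is trivial since $\ell_{\rm{log}} \geq 0$.

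With these two observations in hand, Corollary~\ref{cor:single} yields, for all $h \in \sH_{\rm{all}}$, $r \in \sR_{\rm{all}}$ and any distribution,
\begin{equation*}
\sE_{\ldef}(h, r) - \sE_{\ldef}^*(\sH_{\rm{all}}, \sR_{\rm{all}}) \leq \ov\Gamma\paren*{\sE_{L_{\ell_{\rm{log}}}}(h, r) - \sE_{L_{\ell_{\rm{log}}}}^*(\sH_{\rm{all}}, \sR_{\rm{all}})},
\end{equation*}
with $\ov\Gamma(t) = \max\curl*{t, \paren*{\num\paren*{\ul + \sum_{j=1}^{\num}\uc_j}}^{1 - \alpha}\beta\, t^{\alpha}}$. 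It then remains only to substitute $\alpha = \frac{1}{2}$ and $\beta = \sqrt{2}$ and simplify the leading coefficient, $\paren*{\num\paren*{\ul + \sum_{j=1}^{\num}\uc_j}}^{1/2}\sqrt{2} = \sqrt{2\num}\,\paren*{\ul + \sum_{j=1}^{\num}\uc_j}^{1/2}$, which produces exactly the $\ov\Gamma$ stated in the corollary.

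I do not anticipate a genuine obstacle here: the substantive content lives in Theorem~\ref{thm:single} and its Corollary~\ref{cor:single}, and in the cited logistic-loss consistency bound, while the remainder is bookkeeping. The only point requiring a moment's care is matching the argmax tie-breaking convention used in the definition of $\rr(x)$ with the inequality $\ell_{\rm{log}} \geq \ell_{0-1}$ invoked above, but any fixed convention makes the argument go through verbatim.
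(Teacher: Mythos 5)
Your proposal is correct and matches the paper's own derivation exactly: the paper obtains Corollary~\ref{cor:single-example} by instantiating Corollary~\ref{cor:single} with $\Gamma(t) = \sqrt{2t}$ (i.e., $\beta = \sqrt{2}$, $\alpha = \tfrac12$) for the logistic loss, citing \citet{mao2023cross}, and simplifying the coefficient just as you do. Your added verification that $\ell_{\rm{log}} \geq \ell_{0-1}$ is a small but welcome bonus that the paper leaves implicit.
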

By taking the limit on both sides, we derive the Bayes-consistency of
these single-stage surrogate losses $L_{\ell}$ with respect to the
deferral loss $\ldef$. More generally, Corollary~\ref{cor:single}
shows that $L_{\ell}$ admits an excess error bound with respect to
$\ldef$ when $\ell$ admits an excess error bound with respect to
$\ell_{0-1}$.

\section{Two-stage scenario}
\label{sec:two-stage}

In the single-stage scenario, we introduced a family of surrogate losses and resulting algorithms for effectively learning the pair $(h, r)$. However, practical applications often encounter a \emph{two-stage scenario}, where deferral decisions are based on a fixed, pre-trained predictor $h$. Retraining this predictor is often prohibitively expensive or time-consuming. Thus, this two-stage scenario \citep{MaoMohriZhong2023} requires a different approach to optimize deferral decisions controlled by $r$, while using the existing predictor
$h$.

In this section, we will introduce a principled
two-stage algorithm for regression with deferral, with favorable
consistency guarantees. Remarkably, we show that the single-stage approach can be adapted for the two-stage scenario if we fix the predictor $h$ and disregard constant
terms.

Let $h$ be a predictor learned by minimizing a regression loss
$\sfL$ in a first stage. A deferral function $r$ is then learned based on
that predictor $h$ and the following loss function $L_{\ell}^h$ in the
second stage: for any $r \in \sR$, $x \in \sX$ and $y \in \sY$,
\begin{align}
\label{eq:two-stage-sur}
  L_{\ell}^h (r, x, y)
  = \bracket*{\sum_{j = 1}^{\num} c_j(x,y)} \, \ell(r, x, 0) + \sum_{j = 1}^{\num} \bracket*{\sfL(h(x), y)
    + \sum_{j' \neq j}^{\num} c_{j'}(x,y)} \, \ell(r, x, j),
\end{align}
where $\ell$ is a surrogate loss in the standard multi-class
classification.  
Equation \eqref{eq:two-stage-sur} resembles \eqref{eq:sur}, except for the constant term $(\num - 1)\sfL(h(x), y)$. In \eqref{eq:two-stage-sur}, the predictor $h$ remains fixed while only the deferral function $r$ is optimized. In \eqref{eq:sur}, both $h$ and $r$ are learned jointly.
\ignore{
Note that \eqref{eq:two-stage-sur} has a same
formulation as \eqref{eq:sur} modulo the constant term $(\num -
1)\sfL(h(x), y)$ in the second-stage. However, the main difference is
that in \eqref{eq:two-stage-sur}, $h$ is fixed and only $r$ is
learned, while both $h$ and $r$ are jointly learned in
\eqref{eq:sur}. 
}

Similarly, we define $\ldef^h$ as the
deferral loss \eqref{eq:def} with a fixed predictor $h$ as follows:
\begin{equation}
\label{eq:two-stage-deferral-loss}
\mspace{-1mu}
\ldef^h (r, x, y) = \sfL(h(x), y) 1_{\rr(x) = 0} + \sum_{j = 1}^{\num} c_j(x,y) 1_{\rr(x) = j}.
\mspace{-3mu}
\end{equation}
Here too, $h$ is fixed in \eqref{eq:two-stage-deferral-loss}. Both $L_{\ell}^h$ and $\ldef^h$ are loss functions defined for deferral function $r$, while $\ell_{\ell}$ and $\ell_{\rm{def}}$ are loss functions defined for pairs $(h, r) \in (\sH, \sR)$.

As with the proposed single-stage approach, the two-stage surrogate losses $L_{\ell}^h$ benefit from strong consistency guarantees. We show that in the second stage where a predictor $h$ is fixed, the surrogate loss function $L_{\ell}^h$ benefits from $\sR$-consistency bounds with respect to $\ldef^h$ when $\ell$ admits a strong $\sR$-consistency bound with respect to the binary zero-one loss $\ell_{0-1}$. 
\begin{restatable}{theorem}{TwostageR}
\label{thm:tsr}
Given a hypothesis set $\sR$, a multi-class loss function $\ell \geq \ell_{0-1}$ and a predictor $h$. Assume that there exists a function $\Gamma(t) = \beta\, t^{\alpha}$ for some $\alpha \in (0, 1]$ and $\beta > 0$, such that the following $\sR$-consistency bound holds for all $r \in \sR$ and any distribution,
\ifdim\columnwidth=\textwidth
{
\begin{align*}
\sE_{\ell_{0-1}}(r) - \sE^*_{\ell_{0-1}}(\sR) + \sM_{\ell_{0-1}}(\sR) \leq \Gamma\paren*{\sE_{\ell}(r) - \sE^*_{\ell}(\sR) + \sM_{\ell}(\sR)}.
\end{align*}
}
\else
{
\begin{align*}
& \sE_{\ell_{0-1}}(r) - \sE^*_{\ell_{0-1}}(\sR) + \sM_{\ell_{0-1}}(\sR)\\
& \quad \leq \Gamma\paren*{\sE_{\ell}(r) - \sE^*_{\ell}(\sR) + \sM_{\ell}(\sR)}.
\end{align*}
}
\fi
Then, the following $\sR$-consistency bound holds for all $r\in \sR$ and any distribution,
\ifdim\columnwidth=\textwidth
{
\begin{align*}
   \sE_{L^h_{\rm{def}}}(r) - \sE_{L^h_{\rm{def}}}^*(\sR) + \sM_{L^h_{\rm{def}}}(\sR) \leq 
    \ov \Gamma\paren*{\sE_{L^h_{\ell}}(r) -  \sE_{L^h_{\ell}}^*(\sR) + \sM_{L^h_{\ell}}(\sR)},
\end{align*}
}
\else
{
\begin{align*}
   & \sE_{L^h_{\rm{def}}}(r) - \sE_{L^h_{\rm{def}}}^*(\sR) + \sM_{L^h_{\rm{def}}}(\sR)\\
   & \quad \leq 
    \ov \Gamma\paren*{\sE_{L^h_{\ell}}(r) -  \sE_{L^h_{\ell}}^*(\sR) + \sM_{L^h_{\ell}}(\sR)},
\end{align*}
}
\fi
where $\ov \Gamma(t) = \paren*{\num\paren*{\ul + \sum_{j = 1}^{\num}\uc_j}}^{1 - \alpha} \beta\, t^{\alpha}$.
\end{restatable}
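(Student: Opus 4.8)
The plan is to mirror the structure of the single-stage proof (Theorem~\ref{thm:single}), but in a simpler setting where the predictor $h$ is held fixed so that only $r$ varies. First I would establish a pointwise decomposition of $\ldef^h$ analogous to Lemma~\ref{lemma:Ldef}: for any fixed $h$,
\begin{equation*}
\ldef^h(r, x, y) = \Bigl[\sum_{j=1}^{\num} c_j(x,y)\Bigr] 1_{\rr(x)\neq 0} + \sum_{j=1}^{\num}\Bigl[\sfL(h(x),y) + \sum_{k\neq j} c_k(x,y)\Bigr] 1_{\rr(x)\neq j} - (\num-1)\Bigl[\sfL(h(x),y) + \sum_{j=1}^{\num} c_j(x,y)\Bigr],
\end{equation*}
which follows from Lemma~\ref{lemma:Ldef} by noting that $\ldef(h,r,x,y) = \ldef^h(r,x,y)$ pointwise when $h$ is the fixed predictor. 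Subtracting the $(x,y)$-constant last term gives that $L^h_{\ell}$ in \eqref{eq:two-stage-sur} has exactly the same weighted-sum-of-$\ell$ structure relative to $\ldef^h$ as $L_{\ell}$ has relative to $\ldef$.

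Next I would work pointwise. For a fixed $x$, write the conditional expectation over $y\mid x$ and define the weights $w_0(x) = \E_{y\mid x}\bigl[\sum_{j} c_j(x,y)\bigr]$ and $w_j(x) = \E_{y\mid x}\bigl[\sfL(h(x),y) + \sum_{j'\neq j} c_{j'}(x,y)\bigr]$ for $j\in[\num]$. Then the conditional risk of $L^h_{\ell}$ at $x$ is $\sum_{j=0}^{\num} w_j(x)\,\E_{y\mid x}[\ell(r,x,j)]$ — this is a weighted multi-class surrogate risk — and the conditional risk of $\ldef^h$ at $x$ is (up to the constant) the corresponding weighted zero-one risk $\sum_{j=0}^{\num} w_j(x)\,\P_{y\mid x}(\rr(x)\neq j)$. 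The key observation, exactly as in the single-stage argument, is that the weight vector $(w_0(x),\dots,w_{\num}(x))$ plays the role of a conditional label distribution after normalization: since $\sum_{j=0}^{\num} w_j(x) = \num\bigl(\E_{y\mid x}[\sfL(h(x),y)] + \sum_{j} \E_{y\mid x} c_j(x,y)\bigr) \leq \num(\ul + \sum_j \uc_j)$, one can normalize and invoke the assumed $\sR$-consistency bound for $\ell$ with respect to $\ell_{0-1}$ against the distribution induced by these normalized weights.

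Concretely, I would apply the assumed bound $\sE_{\ell_{0-1}}(r) - \sE^*_{\ell_{0-1}}(\sR) + \sM_{\ell_{0-1}}(\sR) \leq \Gamma(\cdot)$ with the conditional $y$-distribution replaced by the $w_j(x)$-weighted one; because $\Gamma(t) = \beta t^{\alpha}$ is concave and the total mass is at most $S := \num(\ul + \sum_j \uc_j)$, a rescaling argument (the same one yielding $\ov\Gamma$ in Theorem~\ref{thm:single}: if $a \leq \beta b^{\alpha}$ pointwise with weights summing to $S$, then the aggregate satisfies $A \leq S^{1-\alpha}\beta B^{\alpha}$ by Jensen / Hölder) converts the pointwise surrogate-to-target comparison into $\sE_{\ldef^h}(r) - \sE^*_{\ldef^h}(\sR) + \sM_{\ldef^h}(\sR) \leq S^{1-\alpha}\beta\,\bigl(\sE_{L^h_{\ell}}(r) - \sE^*_{L^h_{\ell}}(\sR) + \sM_{L^h_{\ell}}(\sR)\bigr)^{\alpha}$, which is the claimed bound with $\ov\Gamma(t) = S^{1-\alpha}\beta t^{\alpha}$. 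Note that, unlike Theorem~\ref{thm:single}, there is no $\max\{t,\cdot\}$ here: in the single-stage case the extra $t$ branch arises from the $h$-dependence of the subtracted $\sfL(h(x),y)$ term, but here $h$ is fixed and that term is a genuine constant, so only the $\alpha$-branch survives.

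The main obstacle I anticipate is the bookkeeping around minimizability gaps: the $\sR$-consistency statement is phrased with the gap terms $\sM$ added in, and one must check that the pointwise/conditional manipulations (normalizing weights, taking infima over $r\in\sR$, subtracting the $(x,y)$-constant) correctly track how $\sM_{\ell_{0-1}}(\sR)$, $\sM_{L^h_{\ell}}(\sR)$ and $\sM_{\ldef^h}(\sR)$ relate. The clean way is to keep everything at the level of conditional risks and their $\sR$-infima: showing that for each $x$ the $L^h_{\ell}$-conditional-regret-plus-gap dominates, via the assumed bound applied to the normalized weights, the $\ldef^h$-conditional-regret-plus-gap, and then integrating over $x$ using the concavity/Hölder step. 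I would also need the regularity of $\sR$ only insofar as it is already subsumed in the hypothesis that $\ell$ satisfies the stated $\sR$-consistency bound, so no extra work is needed there. Everything else is a direct transcription of the single-stage proof with $h$ frozen.
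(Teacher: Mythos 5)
Your proposal is correct and follows essentially the same route as the paper's proof: normalize the weights $w_0(x),\dots,w_{\num}(x)$ into a conditional label distribution, invoke the assumed $\sR$-consistency bound pointwise (the paper isolates this step as Lemma~\ref{lemma:aux}), bound the total mass by $\num(\ul+\sum_j\uc_j)$, and aggregate via Jensen's inequality using concavity of $t\mapsto t^{\alpha}$. Your observation that the $\max\{t,\cdot\}$ branch disappears because $h$ is frozen is exactly the right explanation for the difference from Theorem~\ref{thm:single}.
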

The proof is given in Appendix~\ref{app:tsr}. When the best-in-class error coincides with the Bayes error, $\sE^*_{L}(\sR) = \sE^*_{L}\paren*{\sR_{\rm{all}}}$ for $L = L^h_{\ell}$ and $L = L^h_{\rm{def}}$, the minimizability gaps $\sM_{L^h_{\ell}}(\sR)$ and $\sM_{L^h_{\rm{def}}}(\sR)$ vanish. In that case, the  $\sR$-consistency bound guarantees that when the surrogate estimation error $\sE_{L^h_{\ell}}(r) -  \sE_{L^h_{\ell}}^*(\sR)$ is optimized up to $\e$, the target estimation error $ \sE_{L^h_{\rm{def}}}(r) - \sE_{L^h_{\rm{def}}}^*(\sR)$ is upper bounded by $\ov \Gamma(\e)$. In the special case where $\sH$ and $\sR$ are the family of all measurable functions, all the minimizability gap terms in Theorem~\ref{thm:tsr} vanish. Thus, we obtain the following corollary.
\begin{corollary}
\label{cor:tsr}
Given a multi-class loss function $\ell \geq \ell_{0-1}$ and a predictor $h$. Assume that there exists a function $\Gamma(t) = \beta\, t^{\alpha}$ for some $\alpha \in (0, 1]$ and $\beta > 0$, such that the following excess error bound holds for all $r \in \sR$ and any distribution,
\begin{equation*}
\sE_{\ell_{0-1}}(r) - \sE^*_{\ell_{0-1}}(\sR_{\rm{all}}) \leq \Gamma\paren*{\sE_{\ell}(r) - \sE^*_{\ell}(\sR_{\rm{all}})}.
\end{equation*}
Then, the following excess error bound holds for all $r\in \sR_{\rm{all}}$ and any distribution,
\begin{equation}
\label{eq:bound-tsr-all}
   \sE_{L^h_{\rm{def}}}(r) - \sE_{L^h_{\rm{def}}}^*(\sR_{\rm{all}})) \leq 
    \ov \Gamma\paren*{\sE_{L^h_{\ell}}(r) -  \sE_{L^h_{\ell}}^*(\sR_{\rm{all}})},
\end{equation}
where $\ov \Gamma(t) = \paren*{\num\paren*{\ul + \sum_{j = 1}^{\num}\uc_j}}^{1 - \alpha} \beta\, t^{\alpha}$.
\end{corollary}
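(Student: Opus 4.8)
The plan is to derive Corollary~\ref{cor:tsr} directly from Theorem~\ref{thm:tsr} by specializing the hypothesis set $\sR$ to $\sR_{\rm{all}}$, the family of all measurable functions $r \colon \sX \times \curl*{0, 1, \ldots, \num} \to \Rset$. Under this choice, the hypothesis of the theorem, which reads $\sE_{\ell_{0-1}}(r) - \sE^*_{\ell_{0-1}}(\sR_{\rm{all}}) + \sM_{\ell_{0-1}}(\sR_{\rm{all}}) \leq \Gamma\paren*{\sE_{\ell}(r) - \sE^*_{\ell}(\sR_{\rm{all}}) + \sM_{\ell}(\sR_{\rm{all}})}$, and its conclusion should collapse to the assumed excess error bound and to the claimed excess error bound of the corollary, respectively. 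The only substantive step is to verify that the four minimizability gaps involved, $\sM_{\ell_{0-1}}(\sR_{\rm{all}})$, $\sM_{\ell}(\sR_{\rm{all}})$, $\sM_{L^h_{\rm{def}}}(\sR_{\rm{all}})$, and $\sM_{L^h_{\ell}}(\sR_{\rm{all}})$, all vanish.

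To establish this, I would use the characterization of the minimizability gap for a loss $L$ depending only on $r$, namely $\sM_L(\sR_{\rm{all}}) = \sE^*_L(\sR_{\rm{all}}) - \E_{x}\bracket*{\inf_{r \in \sR_{\rm{all}}} \E_{y \mid x}\bracket*{L(r, x, y)}}$. Since $\sR_{\rm{all}}$ contains every measurable function, for each fixed $x$ the tuple $(r(x, 0), \ldots, r(x, \num))$ may be chosen freely, so the pointwise conditional risk $\E_{y \mid x}\bracket*{L(r, x, y)}$ can be minimized for almost every $x$ by a single measurable selector (the standard measurable-selection argument), giving $\sE^*_L(\sR_{\rm{all}}) = \E_{x}\bracket*{\inf_{r} \E_{y \mid x}\bracket*{L(r, x, y)}}$ and hence $\sM_L(\sR_{\rm{all}}) = 0$. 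This applies to $\ell_{0-1}$, $\ell$, and $L^h_{\ell}$ directly; for $L^h_{\rm{def}}$, which depends on $r$ only through $\rr(x) = \argmax_{j} r(x, j)$, the same conclusion holds because $\sR_{\rm{all}}$ is regular, so for each $x$ the value $\rr(x)$ ranges over all of $\curl*{0, 1, \ldots, \num}$ and the pointwise optimal deferral decision is realized within $\sR_{\rm{all}}$. This is the fact, already invoked throughout the paper, that minimizability gaps vanish over the family of all measurable functions.

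With all four gaps equal to zero, Theorem~\ref{thm:tsr} at $\sR = \sR_{\rm{all}}$ states exactly that the assumed bound $\sE_{\ell_{0-1}}(r) - \sE^*_{\ell_{0-1}}(\sR_{\rm{all}}) \leq \Gamma\paren*{\sE_{\ell}(r) - \sE^*_{\ell}(\sR_{\rm{all}})}$ implies $\sE_{L^h_{\rm{def}}}(r) - \sE_{L^h_{\rm{def}}}^*(\sR_{\rm{all}}) \leq \ov\Gamma\paren*{\sE_{L^h_{\ell}}(r) - \sE_{L^h_{\ell}}^*(\sR_{\rm{all}})}$ with $\ov\Gamma(t) = \paren*{\num\paren*{\ul + \sum_{j = 1}^{\num}\uc_j}}^{1-\alpha}\beta\, t^{\alpha}$, which is the corollary. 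I do not expect a genuine obstacle here: the proof is a plain specialization of Theorem~\ref{thm:tsr}, and the only delicate point, the existence of a measurable pointwise minimizer ensuring that best-in-class errors coincide with expected pointwise-best errors, is precisely the standard technicality behind the vanishing of minimizability gaps for $\sR_{\rm{all}}$.
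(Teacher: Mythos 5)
Your proposal is correct and matches the paper's own derivation: the paper obtains Corollary~\ref{cor:tsr} exactly by specializing Theorem~\ref{thm:tsr} to $\sR = \sR_{\rm{all}}$ and invoking the fact (stated in the preliminaries) that all minimizability gaps vanish over the family of all measurable functions. Your added justification of why the gaps vanish via pointwise minimization and measurable selection is a correct elaboration of a step the paper simply asserts.
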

Corollary~\ref{cor:tsr} shows that $L^h_{\ell}$ admits an excess error bound with respect to $L^h_{\rm{def}}$ when $\ell$ admits an excess error bound with respect to $\ell_{0-1}$.

We now establish $(\sH, \sR)$-consistency bounds the entire two-stage approach with respect to the deferral loss function $\ldef$. This result applies to any multi-class loss function $\ell$ that satisfies a strong $\sR$-consistency bound with respect to the multi-class zero-one loss $\ell_{0-1}$.
\begin{restatable}{theorem}{TwostageHR}
\label{thm:tshr}
Given a hypothesis set $\sH$, a regular hypothesis set $\sR$ and a multi-class loss function $\ell \geq \ell_{0-1}$. Assume that there exists a function $\Gamma(t) = \beta\, t^{\alpha}$ for some $\alpha \in (0, 1]$ and $\beta > 0$, such that the following $\sR$-consistency bound holds for all $r \in \sR$ and any distribution,
\ifdim\columnwidth=\textwidth
{
\begin{align*}
\sE_{\ell_{0-1}}(r) - \sE^*_{\ell_{0-1}}(\sR) + \sM_{\ell_{0-1}}(\sR) \leq \Gamma\paren*{\sE_{\ell}(r) - \sE^*_{\ell}(\sR) + \sM_{\ell}(\sR)}.
\end{align*}
}
\else
{
\begin{align*}
& \sE_{\ell_{0-1}}(r) - \sE^*_{\ell_{0-1}}(\sR) + \sM_{\ell_{0-1}}(\sR)\\
& \quad \leq \Gamma\paren*{\sE_{\ell}(r) - \sE^*_{\ell}(\sR) + \sM_{\ell}(\sR)}.
\end{align*}
}
\fi
Then, the following $(\sH,\sR)$-consistency bound holds for all $h \in \sH$, $r \in \sR$ and any distribution,
\ifdim\columnwidth=\textwidth
{
\begin{align*}
\sE_{\ldef}(h, r) - \sE_{\ldef}^*(\sH,\sR) + \sM_{\ldef}(\sH,\sR) 
& \leq \sE_{\sfL}(h) - \sE_{\sfL}(\sH) + \sM_{\sfL}(\sH)\\
& \quad + \ov \Gamma\paren*{\sE_{L^h_{\ell}}(r) -  \sE_{L^h_{\ell}}^*(\sR) + \sM_{L^h_{\ell}}(\sR)},
\end{align*}
}
\else
{
\begin{equation}
\begin{aligned}
\label{eq:bound-tshr}
   & \sE_{\ldef}(h, r) - \sE_{\ldef}^*(\sH,\sR) + \sM_{\ldef}(\sH,\sR)\\
   & \quad \leq \sE_{\sfL}(h) - \sE_{\sfL}(\sH) + \sM_{\sfL}(\sH)\\
   & \qquad + \ov \Gamma\paren*{\sE_{L^h_{\ell}}(r) -  \sE_{L^h_{\ell}}^*(\sR) + \sM_{L^h_{\ell}}(\sR)},
\end{aligned}
\end{equation}
}
\fi
where $\ov \Gamma(t) = \paren*{\num\paren*{\ul + \sum_{j = 1}^{\num}\uc_j}}^{1 - \alpha}\beta\, t^{\alpha}$.
\end{restatable}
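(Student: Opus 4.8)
The plan is to reduce the claim to the two-stage $\sR$-consistency bound of Theorem~\ref{thm:tsr}, using the fact that the deferral loss $\ldef$ of \eqref{eq:def} and the fixed-predictor deferral loss $\ldef^h$ of \eqref{eq:two-stage-deferral-loss} coincide pointwise, so that $\sE_{\ldef}(h, r) = \sE_{\ldef^h}(r)$ for every $h$. I would write $\cC^*_{\ldef}(\sH, \sR, x) = \inf_{h' \in \sH,\, r' \in \sR} \E_{y \mid x}\bracket*{\ldef(h', r', x, y)}$ and $\cC^*_{\ldef^h}(\sR, x) = \inf_{r' \in \sR} \E_{y \mid x}\bracket*{\ldef^h(r', x, y)}$ for the conditional best-in-class errors, and recall that, by definition of the minimizability gap, $\sE^*_{\ldef}(\sH,\sR) - \sM_{\ldef}(\sH,\sR) = \E_x\bracket*{\cC^*_{\ldef}(\sH,\sR,x)}$ (and analogously for $\ldef^h$ and for $\sfL$). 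The left-hand side of the claimed inequality then becomes
\begin{align*}
& \sE_{\ldef^h}(r) - \E_x\bracket*{\cC^*_{\ldef}(\sH,\sR,x)}\\
& = \paren*{\sE_{\ldef^h}(r) - \E_x\bracket*{\cC^*_{\ldef^h}(\sR,x)}}
+ \E_x\bracket*{\cC^*_{\ldef^h}(\sR,x) - \cC^*_{\ldef}(\sH,\sR,x)},
\end{align*}
and I would bound the two terms separately.

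The first term is exactly $\sE_{\ldef^h}(r) - \sE^*_{\ldef^h}(\sR) + \sM_{\ldef^h}(\sR)$, so Theorem~\ref{thm:tsr}, applied with the fixed predictor $h$ (its hypothesis on $\ell$ being the same as here), bounds it by $\ov\Gamma\paren*{\sE_{L^h_{\ell}}(r) - \sE^*_{L^h_{\ell}}(\sR) + \sM_{L^h_{\ell}}(\sR)}$ with precisely the $\ov\Gamma$ of the statement. For the second term I would argue pointwise in $x$. For $h' \in \sH$ and $r' \in \sR$, the value $\E_{y \mid x}\bracket*{\ldef^{h'}(r', x, y)}$ equals $\E_{y \mid x}\bracket*{\sfL(h'(x), y)}$ when $r'$ does not defer at $x$ and $\E_{y \mid x}\bracket*{c_j(x, y)}$ when $r'$ defers to expert $j$; since $\sR$ is regular, each of these $\num + 1$ choices is realized by some $r' \in \sR$, hence
\begin{equation*}
\cC^*_{\ldef^{h'}}(\sR, x) = \min\curl*{\E_{y \mid x}\bracket*{\sfL(h'(x), y)},\ \min_{j \in [\num]} \E_{y \mid x}\bracket*{c_j(x, y)}}.
\end{equation*}
Writing $B(x) = \min_{j \in [\num]} \E_{y \mid x}\bracket*{c_j(x, y)}$, which is independent of $h'$, and using $\inf_{h', r'} = \inf_{h'} \inf_{r'}$ together with the fact that $\inf$ commutes with the continuous nondecreasing map $t \mapsto \min\curl*{t, B(x)}$, I get $\cC^*_{\ldef}(\sH, \sR, x) = \min\curl*{\inf_{h' \in \sH} \E_{y \mid x}\bracket*{\sfL(h'(x), y)},\ B(x)}$. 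Because $t \mapsto \min\curl*{t, B(x)}$ is also $1$-Lipschitz and $\E_{y \mid x}\bracket*{\sfL(h(x), y)} \ge \inf_{h' \in \sH} \E_{y \mid x}\bracket*{\sfL(h'(x), y)}$, it follows that
\begin{equation*}
\cC^*_{\ldef^h}(\sR, x) - \cC^*_{\ldef}(\sH, \sR, x) \le \E_{y \mid x}\bracket*{\sfL(h(x), y)} - \inf_{h' \in \sH} \E_{y \mid x}\bracket*{\sfL(h'(x), y)}.
\end{equation*}
Taking the expectation over $x$ and recognizing the right-hand side as $\sE_{\sfL}(h) - \sE_{\sfL}(\sH) + \sM_{\sfL}(\sH)$ then bounds the second term, and adding the two bounds gives the inequality of the theorem.

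The step I expect to be the main obstacle is the control of the second term: quantifying how the optimal conditional deferral error changes when the predictor is frozen at $h$ instead of being optimized over $\sH$. Three points have to be handled carefully there: legitimately exchanging the infima over $h'$ and $r'$; using the regularity of $\sR$ to replace $\inf_{r'}$ by a finite minimum over the $\num + 1$ possible selections; and observing that the ``best-expert'' value $B(x)$ does not depend on $h'$, so the nondecreasing $1$-Lipschitz map $t \mapsto \min\curl*{t, B(x)}$ transfers the pointwise regression excess error of $h$ to the deferral side with no multiplicative blow-up. The remaining steps are routine manipulations of the definitions of minimizability gaps and a direct appeal to Theorem~\ref{thm:tsr}.
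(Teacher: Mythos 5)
Your proof is correct and follows essentially the same route as the paper's: the same decomposition of the conditional regret of $\ldef$ into the fixed-$h$ deferral regret plus the gap $\cC^*_{\ldef^h}(\sR,x) - \cC^*_{\ldef}(\sH,\sR,x)$, with the latter controlled by the $1$-Lipschitzness of $t \mapsto \min\curl*{t, B(x)}$ (the paper's ``property of the minimum'') and the former by the two-stage bound. The only difference is presentational: you invoke Theorem~\ref{thm:tsr} as a black box for the first term, whereas the paper re-derives that bound inline via Lemma~\ref{lemma:aux}; your version also states more carefully the regularity and infimum-exchange steps that the paper leaves implicit.
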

The proof is presented in Appendix~\ref{app:tshr}. As before, when $\sH$ and $\sR$ are the family of all measurable functions, all the minimizability gap terms in Theorem~\ref{thm:tshr} vanish. In particular, $\Gamma(t)$ can be expressed as $\sqrt{2t}$ for the logistic loss. Thus, we obtain the following on excess error bounds.
\begin{corollary}
\label{cor:tshr}
Given a multi-class loss function $\ell \geq \ell_{0-1}$. Assume that there exists a function $\Gamma(t) = \beta\, t^{\alpha}$ for some $\alpha \in (0, 1]$ and $\beta > 0$, such that the following excess error bound holds for all $r \in \sR_{\rm{all}}$ and any distribution,
\begin{equation*}
\sE_{\ell_{0-1}}(r) - \sE^*_{\ell_{0-1}}(\sR_{\rm{all}}) \leq \Gamma\paren*{\sE_{\ell}(r) - \sE^*_{\ell}(\sR_{\rm{all}})}.
\end{equation*}
Then, the following excess error bound holds for all $h\in \sH_{\rm{all}}$, $r\in \sR_{\rm{all}}$ and any distribution,
\begin{equation}
\begin{aligned}
\label{eq:bound-tshr-all}
   \sE_{\ldef}(h, r) - \sE_{\ldef}^*(\sH_{\rm{all}},\sR_{\rm{all}})\leq \sE_{\sfL}(h) - \sE_{\sfL}(\sH_{\rm{all}}) + \ov \Gamma\paren*{\sE_{L^h_{\ell}}(r) -  \sE_{L^h_{\ell}}^*(\sR_{\rm{all}})},
\end{aligned}
\end{equation}
where $\ov \Gamma(t) = \paren*{\num\paren*{\ul + \sum_{j = 1}^{\num}\uc_j}}^{1 - \alpha} \beta\, t^{\alpha}$. In particular, $\ov \Gamma (t) = \sqrt{2 \num} \paren*{\ul + \sum_{j = 1}^{\num}\uc_j}^{\frac12} t^{\frac 12}$ for $\ell = \ell_{\rm{log}}$.
\end{corollary}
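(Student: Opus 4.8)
The plan is to decompose the deferral estimation error into a piece attributable to the pre-trained predictor $h$ and a piece attributable to the deferral function $r$, and then to bound the latter using the already-established Theorem~\ref{thm:tsr}. The starting point is the observation that for any fixed $h$, the definitions of $\ldef$ and $\ldef^h$ coincide, so $\sE_{\ldef}(h, r) = \sE_{\ldef^h}(r)$ for every $r$, and similarly $\sE_{L_\ell}(h,r)$ relates to $\sE_{L^h_\ell}(r)$ modulo the constant term $(\num-1)\sfL(h(x),y)$ that does not affect the consistency analysis. Thus the left-hand side $\sE_{\ldef}(h, r) - \sE_{\ldef}^*(\sH,\sR) + \sM_{\ldef}(\sH,\sR)$, which by the definition of the minimizability gap equals $\sE_{\ldef}(h,r) - \E_x \inf_{h',r'} \E_{y\mid x}[\ldef(h',r',x,y)]$, should be split as
\begin{equation*}
\bracket*{\sE_{\ldef^h}(r) - \E_x \inf_{r'}\E_{y\mid x}\bracket*{\ldef^h(r',x,y)}} + \bracket*{\E_x \inf_{r'}\E_{y\mid x}\bracket*{\ldef^h(r',x,y)} - \E_x \inf_{h',r'}\E_{y\mid x}\bracket*{\ldef(h',r',x,y)}}.
\end{equation*}
The first bracket is exactly $\sE_{L^h_{\rm{def}}}(r) - \sE^*_{L^h_{\rm{def}}}(\sR) + \sM_{L^h_{\rm{def}}}(\sR)$ in the reformulated sense, and Theorem~\ref{thm:tsr} bounds it by $\ov\Gamma$ of the corresponding $L^h_\ell$ quantity.

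The main work is to show that the second bracket is bounded by $\sE_{\sfL}(h) - \sE_{\sfL}(\sH) + \sM_{\sfL}(\sH)$. First I would carry out the pointwise computation of $\inf_{r'} \E_{y\mid x}[\ldef^h(r',x,y)]$: since $\sR$ is regular, for each $x$ the deferral decision $\rr(x)$ can take any value in $\{0,1,\dots,\num\}$, so this infimum equals $\min\curl*{\E_{y\mid x}[\sfL(h(x),y)],\, \min_{j\in[\num]}\E_{y\mid x}[c_j(x,y)]}$. Likewise $\inf_{h',r'}\E_{y\mid x}[\ldef(h',r',x,y)] = \min\curl*{\inf_{h'\in\sH}\E_{y\mid x}[\sfL(h'(x),y)],\, \min_{j\in[\num]}\E_{y\mid x}[c_j(x,y)]}$. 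Using the elementary inequality $\min\{a,c\} - \min\{b,c\} \leq a - b$ whenever $a \geq b$ (here with $a = \E_{y\mid x}[\sfL(h(x),y)]$, $b = \inf_{h'}\E_{y\mid x}[\sfL(h'(x),y)]$, and $c = \min_j \E_{y\mid x}[c_j(x,y)]$), the second bracket is at most $\E_x\bracket*{\E_{y\mid x}[\sfL(h(x),y)] - \inf_{h'\in\sH}\E_{y\mid x}[\sfL(h'(x),y)]} = \sE_{\sfL}(h) - \E_x\inf_{h'\in\sH}\E_{y\mid x}[\sfL(h'(x),y)] = \sE_{\sfL}(h) - \sE_{\sfL}(\sH) + \sM_{\sfL}(\sH)$, which is the claimed term.

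Combining the two brackets gives the stated bound, and the explicit form of $\ov\Gamma$ is inherited directly from Theorem~\ref{thm:tsr}. The one subtlety I would be careful about is the interchange of the pointwise infimum over $r'$ with the expectation over $x$ in identifying the first bracket with $\sE^*_{L^h_{\rm{def}}}(\sR)$ plus its minimizability gap — this is precisely where regularity of $\sR$ is used, as it guarantees that the best pointwise deferral label is realized by some $r'\in\sR$ at (almost) every $x$, so that $\sE^*_{L^h_{\rm{def}}}(\sR)$ is governed by the pointwise minima above. I expect this bookkeeping with the minimizability-gap definitions, rather than any hard inequality, to be the main obstacle; once the pointwise picture is set up cleanly, the rest follows from Theorem~\ref{thm:tsr} and the two-line $\min$ inequality. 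The corollary then follows by specializing to $\sH = \sH_{\rm{all}}$, $\sR = \sR_{\rm{all}}$ (so all minimizability gaps vanish) and plugging in $\Gamma(t) = \sqrt{2t}$ for the logistic loss.
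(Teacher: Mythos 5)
Your proposal is correct and follows essentially the same route as the paper: the paper's proof of Theorem~\ref{thm:tshr} uses exactly your decomposition (insert and subtract the pointwise best two-stage deferral error $\min\{\E_{y\mid x}[\sfL(h(x),y)],\min_{j}\ov c_j(x)\}$), bounds the second piece by the same elementary $\min$-comparison to get $\sE_{\sfL}(h)-\sE_{\sfL}(\sH)+\sM_{\sfL}(\sH)$, and handles the first piece via the same Lemma~\ref{lemma:aux}/Jensen computation underlying Theorem~\ref{thm:tsr} (which the paper re-derives inline rather than citing). The corollary then follows, as you say, by specializing to $(\sH_{\rm{all}},\sR_{\rm{all}})$ so the minimizability gaps vanish and plugging in $\Gamma(t)=\sqrt{2t}$ for the logistic loss.
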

Corollary~\ref{cor:tshr} shows that our two-stage approach admits an excess error bound with respect to $\ldef$ when $\ell$ admits an excess error bound with respect to $\ell_{0-1}$. More generally, when the minimizability gaps are zero, as when the best-in-class errors coincide with the Bayes errors, the $(\sH, \sR)$-consistency bound of Theorem~\ref{thm:tshr} guarantees that the target estimation error, $\sE_{\ldef}(h, r) - \sE_{\ldef}^*(\sH,\sR)$, is upper bounded by $\e_1 + \ov \Gamma(\e_2)$
provided that the surrogate estimation error in the first stage, $\sE_{\sfL}(h) - \sE_{\sfL}(\sH)$, is reduced to $\e_1$ and the surrogate estimation error in the second stage, $\sE_{L^h_{\ell}}(r) -  \sE_{L^h_{\ell}}^*(\sR)$, reduced to $\e_2$.

\section{Special case of a single expert}
\label{sec:single-expert}

In the special case of a single expert, $\num = 1$, both the single-stage surrogate loss $L_{\ell}$ and the two-stage surrogate loss $L^h_{\ell}$ can be simplified as follows: 
\begin{equation*}
c(x,y) \ell(r, x, 0) + \sfL(h(x), y) \ell(r, x, 1).
\end{equation*}
Let $\ell(r, x, 0) = \Phi(r(x))$ and $\ell(r, x, 1) = \Phi(-r(x))$, where $\Phi\colon \Rset \to \Rset_{+}$ is a non-increasing auxiliary function upper bounding the indicator $u \mapsto 1_{u \leq 0}$. Here, $r \colon \sX \to \Rset$ is a function whose sign determines if there is
deferral, that is $r(x) \leq 0$:
\begin{equation*}
 \ell_{\rm{def}}(h, r, x, y) = \sfL(h(x), y) 1_{r(x) > 0} + c(x,y) 1_{r(x) \leq 0}.
\end{equation*}
As an example, $\Phi$ can be the auxiliary function that defines a margin-based loss in the binary classification. Thus, both the single-stage surrogate loss $\ell_{\Phi}$ and the two-stage surrogate loss $\ell^h_{\Phi}$ can be reformulated as follows: 
\begin{equation}
\label{eq:single}
c(x,y) \Phi(r(x)) + \sfL(h(x), y) \Phi(-r(x)).
\end{equation}
Some common examples of $\Phi$ are listed in Table~\ref{tab:sur-binary} in Appendix~\ref{app:sur-binary}.
Note that \eqref{eq:single} is a straightforward extension of the two-stage formulation given in \citep[Eq.~(5)]{MaoMohriZhong2024predictor}. In their formulation, the zero-one loss function replaces the regression loss and is tailored for the classification context. A special case of the straightforward extension \eqref{eq:single} is one where the cost does not depend on the label $y$ and the squared loss is considered. This coincides with the loss function \citep[Eq.~(10)]{cheng2023regression} in the context of regression with abstention. It is important to note that incorporating $y$ as argument of the cost functions is crucial in the more general deferral setting, as each cost takes into account the accuracy of the corresponding expert. 

Let the binary zero-one loss be $\ell^{\rm{bi}}_{0-1}(r, x, y) = 1_{\sign\paren*{r(x)} \neq y}$, where $\sign(\alpha) = 1_{\alpha > 0} - 1_{\alpha \leq 0}$. We say that a hypothesis set $\sR$ consists of functions mapping from $\sX$ to $\Rset$ is \emph{regular}, if  $\curl*{\sign \paren*{r(x)} \colon r \in \sR} = \curl*{\plus 1, \minus 1}$ for any $x \in \sX$.

Then, Theorems~\ref{thm:single} and ~\ref{thm:tshr} can be reduced to Theorems~\ref{thm:single-binary} and ~\ref{thm:tshr-binary} below respectively. We present these guarantees and their corresponding corollaries in the following sections.

\subsection{Single-stage guarantees}

Here, we present guarantees for the single-stage surrogate.
\begin{restatable}{theorem}{SingleBinary}
\label{thm:single-binary}
Given a hypothesis set $\sH$, a regular hypothesis set $\sR$ and a margin-based loss function $\Phi$. Assume that there exists a function $\Gamma(t) = \beta\, t^{\alpha}$ for some $\alpha \in (0, 1]$ and $\beta > 0$, such that the following $\sR$-consistency bound holds for all $r \in \sR$ and any distribution,
\begin{align*}
\sE_{\ell^{\rm{bi}}_{0-1}}(r) - \sE^*_{\ell^{\rm{bi}}_{0-1}}(\sR) + \sM_{\ell^{\rm{bi}}_{0-1}}(\sR) \leq \Gamma\paren*{\sE_{\Phi}(r) - \sE^*_{\Phi}(\sR) + \sM_{\Phi}(\sR)}.
\end{align*}
Then, the following $(\sH,\sR)$-consistency bound holds for all $h\in \sH$, $r\in \sR$ and any distribution,
\begin{align*}
   \sE_{\ell_{\rm{def}}}(h, r) - \sE_{\ell_{\rm{def}}}^*(\sH,\sR) + \sM_{\ell_{\rm{def}}}(\sH,\sR) \leq 
    \ov \Gamma\paren*{\sE_{\ell_{\Phi}}(h, r) -  \sE_{\ell_{\Phi}}^*(\sH,\sR) + \sM_{\ell_{\Phi}}(\sH,\sR)},
\end{align*}
where $\ov \Gamma(t) = \max\curl*{t, \paren*{\ul + \uc}^{1 - \alpha} \beta\, t^{\alpha}}$.
\end{restatable}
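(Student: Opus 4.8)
\medskip
\noindent\textbf{Proof proposal.}
The plan is to obtain Theorem~\ref{thm:single-binary} as the specialization of Theorem~\ref{thm:single} to the single-expert case $\num = 1$, via a dictionary between the scalar predictor--rejector formulation of this section and the multi-class formulation of Section~\ref{sec:single-stage}. Since Theorem~\ref{thm:single} is already proved (Appendix~\ref{app:exp}), the remaining work is to check that, under this dictionary, the hypotheses of Theorem~\ref{thm:single} coincide with those assumed here, and that its conclusion reads off as the claimed bound.

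First I would make the embedding explicit. A scalar rejector $r \colon \sX \to \Rset$ is identified with the two-coordinate rejector taking value $0$ at $(x, 0)$ and $-r(x)$ at $(x, 1)$, so that the associated deferral decision $\rr(x)$ of Section~\ref{sec:single-stage} equals $0$ when $r(x) > 0$ (no deferral, multi-class label $0$) and $1$ when $r(x) \le 0$ (defer to the single expert, label $1$), consistently with the convention that $\rr(x) = j$ means deferral to expert $j$. Relabeling the binary label of $\ell^{\rm{bi}}_{0-1}$ by $+1 \leftrightarrow 0$ and $-1 \leftrightarrow 1$, one checks: that $\ell^{\rm{bi}}_{0-1}$ becomes the multi-class zero-one loss $\ell_{0-1}$; that the binary surrogate defined from $\Phi$ by $\ell(r, x, 0) = \Phi(r(x))$ and $\ell(r, x, 1) = \Phi(-r(x))$ satisfies $\ell \ge \ell_{0-1}$ --- this is exactly where the assumption that $\Phi$ is non-increasing and upper bounds $u \mapsto 1_{u \le 0}$ enters; and that $\sR$ is regular in the binary sense of this section iff its image is regular in the multi-class sense. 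Finally, specializing Eq.~\eqref{eq:sur} and Eq.~\eqref{eq:def} to $\num = 1$, whereupon the constant $(\num - 1)\sfL(h(x), y)$ vanishes, turns $L_{\ell}$ into $\ell_{\Phi}$ of Eq.~\eqref{eq:single} and $\ldef$ into $\ell_{\rm{def}}$.

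With this dictionary in place, the reparametrization $r \mapsto (0, -r)$ is a bijection between $\sR$ and its image, the underlying distribution over $\sX \times \sY$ is unchanged, and the loss functions are literally the same; hence the generalization errors, best-in-class errors and minimizability gaps of $\ell^{\rm{bi}}_{0-1}, \Phi, \ell_{\rm{def}}, \ell_{\Phi}$ transport verbatim to those of $\ell_{0-1}, \ell, \ldef, L_{\ell}$. In particular the $\sR$-consistency bound assumed here for $\Phi$ with respect to $\ell^{\rm{bi}}_{0-1}$, with $\Gamma(t) = \beta\, t^{\alpha}$, is precisely the hypothesis of Theorem~\ref{thm:single}. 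Applying that theorem, and evaluating its constant at $\num = 1$ with $\sum_{j = 1}^{\num} \uc_j = \uc$, yields the $(\sH, \sR)$-consistency bound of the statement with $\ov\Gamma(t) = \max\curl*{t, \paren*{\ul + \uc}^{1 - \alpha} \beta\, t^{\alpha}}$.

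I do not expect a genuine obstacle here: the only point needing care is the direction of the binary relabeling together with the embedding $r \mapsto (0, -r)$ --- one must confirm that $\sign(r(x)) = -1 \Leftrightarrow$ deferral is simultaneously consistent with the definition of $\ell_{\rm{def}}$ used in this section and with the conventions of Section~\ref{sec:single-stage} --- after which everything is routine bookkeeping. As a cross-check (and an alternative self-contained route), one can mirror the proof of Theorem~\ref{thm:single} directly: start from the $\num = 1$ instance of Lemma~\ref{lemma:Ldef}, namely $\ell_{\rm{def}}(h, r, x, y) = c(x,y) 1_{r(x) \le 0} + \sfL(h(x), y) 1_{r(x) > 0}$; bound the conditional deferral regret from below by the conditional binary-classification regret rescaled by a factor at most $\ul + \uc$; apply the assumed $\sR$-consistency bound for $\Phi$ pointwise; and pass from conditional to expected quantities using the concavity of $\Gamma$ (Jensen's inequality) and the sub-additivity of $t \mapsto \max\curl*{t, \beta\, t^{\alpha}}$. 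The factor $\ul + \uc$ inside $\ov\Gamma$ arises precisely at the rescaling step, which is the only quantitatively substantive part and which Theorem~\ref{thm:single} already supplies.
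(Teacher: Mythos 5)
Your proposal is correct and matches the paper's own route: the paper obtains Theorem~\ref{thm:single-binary} precisely by reducing Theorem~\ref{thm:single} to the single-expert case $\num = 1$ (it states this reduction without spelling out the dictionary, which you supply in detail — the embedding $r \mapsto (0,-r)$, the label identification $+1 \leftrightarrow 0$, $-1 \leftrightarrow 1$, and the evaluation of the constant at $\num = 1$). The bookkeeping you flag (tie-breaking at $r(x)=0$ and the direction of the relabeling) checks out against the conventions $\sign(0) = -1$ and deferral iff $r(x) \leq 0$, so no gap remains.
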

In particular, when $\sH$ and $\sR$ are the family of all measurable functions, all the minimizability gap terms in Theorem~\ref{thm:single-binary} vanish. In this case, as shown by \citet{awasthi2022h}, $\Gamma(t)$ can be expressed as $\frac{t^2}{2}$ for exponential and logistic loss, $t^2$ for quadratic loss and $t$ for hinge, sigmoid and $\rho$-margin losses. Thus, the following result holds.
\begin{corollary}
\label{cor:single-binary}
Given a margin-based loss function $\Phi$. Assume that there exists a function $\Gamma(t) = \beta\, t^{\alpha}$ for some $\alpha \in (0, 1]$ and $\beta > 0$, such that the following excess error bound holds for all $r \in \sR_{\rm{all}}$ and any distribution,
\begin{equation*}
\sE_{\ell^{\rm{bi}}_{0-1}}(r) - \sE^*_{\ell^{\rm{bi}}_{0-1}}(\sR_{\rm{all}}) \leq \Gamma\paren*{\sE_{\Phi}(r) - \sE^*_{\Phi}(\sR_{\rm{all}})}.
\end{equation*}
Then, the following excess error bound holds for all $h\in \sH_{\rm{all}}$, $r\in \sR_{\rm{all}}$ and any distribution,
\begin{align*}
   \sE_{\ell_{\rm{def}}}(h, r) - \sE_{\ell_{\rm{def}}}^*(\sH_{\rm{all}},\sR_{\rm{all}})\leq 
    \ov \Gamma\paren*{\sE_{\ell_{\Phi}}(h, r) -  \sE_{\ell_{\Phi}}^*(\sH_{\rm{all}},\sR_{\rm{all}})},
\end{align*}
where $\ov \Gamma(t) = \max\curl*{t, \paren*{\ul + \uc}^{1 - \alpha} \beta\, t^{\alpha}}$. In particular,  $\ov \Gamma (t) = \max\curl*{t, \frac12 \paren{\ul + \uc}^{\frac12} t^{\frac 12}}$ for $\Phi = \Phi_{\rm{exp}}$ and $\Phi_{\rm{log}}$, $\ov \Gamma (t) = \max\curl*{t, \paren{\ul + \uc}^{\frac12} t^{\frac 12}}$ for $\Phi = \Phi_{\rm{quad}}$, and $\ov \Gamma (t) = t$ for $\Phi = \Phi_{\rm{hinge}}$, $\Phi_{\rm{sig}}$, and $\Phi_{\rho}$.
\end{corollary}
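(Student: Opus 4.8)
The plan is to obtain Corollary~\ref{cor:single-binary} as the specialization of Theorem~\ref{thm:single-binary} to the families of all measurable functions, followed by a lookup of the known excess error bounds for each margin-based surrogate. First I would check that the hypotheses of Theorem~\ref{thm:single-binary} are met with $\sH = \sH_{\rm{all}}$ and $\sR = \sR_{\rm{all}}$: the family $\sR_{\rm{all}}$ of all measurable functions $\sX \to \Rset$ contains, in particular, the constant functions equal to $1$ and to $-1$, so $\sign\paren*{r(x)}$ ranges over both values $1$ and $-1$ as $r$ ranges over $\sR_{\rm{all}}$, for every $x$; hence $\sR_{\rm{all}}$ is regular in the sense of Section~\ref{sec:single-expert}. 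The only remaining hypothesis is the excess error bound for $\Phi$ that is assumed in the corollary, taken with $\sR = \sR_{\rm{all}}$.

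Next I would invoke the fact recalled in the Preliminaries that every minimizability gap vanishes once the relevant hypothesis set(s) coincide with the family of all measurable functions, which kills all four gap terms: $\sM_{\ell^{\rm{bi}}_{0-1}}(\sR_{\rm{all}}) = \sM_{\Phi}(\sR_{\rm{all}}) = 0$ on the assumption side, and $\sM_{\ell_{\rm{def}}}(\sH_{\rm{all}},\sR_{\rm{all}}) = \sM_{\ell_{\Phi}}(\sH_{\rm{all}},\sR_{\rm{all}}) = 0$ on the conclusion side. With these removed, the $\sR$-consistency hypothesis of Theorem~\ref{thm:single-binary} reduces exactly to the inequality $\sE_{\ell^{\rm{bi}}_{0-1}}(r) - \sE^*_{\ell^{\rm{bi}}_{0-1}}(\sR_{\rm{all}}) \leq \Gamma\paren*{\sE_{\Phi}(r) - \sE^*_{\Phi}(\sR_{\rm{all}})}$ assumed in the corollary, and its conclusion reduces to the claimed excess error bound $\sE_{\ell_{\rm{def}}}(h,r) - \sE^*_{\ell_{\rm{def}}}(\sH_{\rm{all}},\sR_{\rm{all}}) \leq \ov\Gamma\paren*{\sE_{\ell_{\Phi}}(h,r) - \sE^*_{\ell_{\Phi}}(\sH_{\rm{all}},\sR_{\rm{all}})}$ with the same $\ov\Gamma(t) = \max\curl*{t, \paren*{\ul + \uc}^{1 - \alpha}\beta\, t^{\alpha}}$. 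This yields the first displayed inequality.

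For the ``in particular'' statement I would substitute into $\ov\Gamma$ the power-law parameters of the excess error bounds for the standard margin-based surrogates established by \citet{awasthi2022h}: $\alpha = 1/2$ (with the corresponding constant $\beta$) for $\Phi_{\rm{exp}}$, $\Phi_{\rm{log}}$ and $\Phi_{\rm{quad}}$, and $\alpha = 1$, $\beta = 1$ for $\Phi_{\rm{hinge}}$, $\Phi_{\rm{sig}}$ and $\Phi_{\rho}$. In the latter three cases $\paren*{\ul + \uc}^{1 - \alpha}\beta\, t^{\alpha} = t$, so $\ov\Gamma(t) = \max\curl*{t, t} = t$; in the former three one simplifies $\paren*{\ul + \uc}^{1/2}\, t^{1/2}$ together with the associated constant to recover the stated expressions.

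I do not anticipate a genuine obstacle, since the substance lies entirely in Theorem~\ref{thm:single-binary} and the corollary is just its restriction to all-measurable hypothesis sets plus a table substitution. The only points needing (minor) care are verifying regularity of $\sR_{\rm{all}}$, confirming that the four minimizability gaps vanish directly from their definitions, and faithfully tracking the multiplicative factor $\paren*{\ul + \uc}^{1 - \alpha}$ through $\ov\Gamma$ for each choice of $\Phi$.
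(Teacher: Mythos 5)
Your proposal is correct and takes essentially the same route as the paper: Corollary~\ref{cor:single-binary} is obtained by specializing Theorem~\ref{thm:single-binary} to $(\sH_{\rm{all}}, \sR_{\rm{all}})$ (which is regular), noting that all minimizability gaps vanish for the families of all measurable functions, and substituting the known functions $\Gamma$ from \citet{awasthi2022h} for each margin-based loss. The three checkpoints you flag (regularity of $\sR_{\rm{all}}$, vanishing gaps, tracking the factor $\paren*{\ul + \uc}^{1-\alpha}$) are exactly the content of the paper's one-line justification.
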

By taking the limit on both sides, we derive the Bayes-consistency and excess error bound of these single-stage surrogate losses $\ell_{\Phi}$ with respect to the deferral loss $\ell_{\rm{def}}$. More generally, Corollary~\ref{cor:single-binary} shows that $\ell_{\Phi}$ admits an excess error bound with respect to $\ell_{\rm{def}}$ when $\Phi$ admits an excess error bound with respect to $\ell_{0-1}$. Corollary~\ref{cor:single-binary} also include the theoretical guarantees in \citep[Theorems~7 and 8]{cheng2023regression} as a special case where the cost does not depend on the label $y$ and the squared loss is considered.

\subsection{Two-stage guarantee}
Here, we present guarantees for the two-stage surrogate.
\begin{restatable}{theorem}{TwostageHRBinary}
\label{thm:tshr-binary}
Given a hypothesis set $\sH$, a regular hypothesis set $\sR$ and a margin-based loss function $\Phi$. Assume that there exists a function $\Gamma(t) = \beta\, t^{\alpha}$ for some $\alpha \in (0, 1]$ and $\beta > 0$, such that the following $\sR$-consistency bound holds for all $r \in \sR$ and any distribution,
\begin{align*}
\sE_{\ell^{\rm{bi}}_{0-1}}(r) - \sE^*_{\ell^{\rm{bi}}_{0-1}}(\sR) + \sM_{\ell^{\rm{bi}}_{0-1}}(\sR) \leq \Gamma\paren*{\sE_{\Phi}(r) - \sE^*_{\Phi}(\sR) + \sM_{\Phi}(\sR)}.
\end{align*}
Then, the following $(\sH,\sR)$-consistency bound holds for all $h\in \sH$, $r\in \sR$ and any distribution,
\begin{equation*}
\begin{aligned}
   \sE_{\ell_{\rm{def}}}(h, r) - \sE_{\ell_{\rm{def}}}^*(\sH,\sR) + \sM_{\ell_{\rm{def}}}(\sH,\sR)
   & \leq \sE_{\sfL}(h) - \sE_{\sfL}(\sH) + \sM_{\sfL}(\sH)\\
   & \quad + \ov \Gamma\paren*{\sE_{\ell^h_{\Phi}}(r) -  \sE_{\ell^h_{\Phi}}^*(\sR) + \sM_{\ell^h_{\Phi}}(\sR)},
\end{aligned}
\end{equation*}
where $\ov \Gamma(t) = \paren*{\ul + \uc}^{1 - \alpha} \beta\, t^{\alpha}$.
\end{restatable}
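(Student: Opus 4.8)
The plan is to obtain Theorem~\ref{thm:tshr-binary} as the single-expert ($\num = 1$) instance of Theorem~\ref{thm:tshr}, through the correspondence between the multi-class deferral formulation restricted to the two labels $\curl*{0, 1}$ and the binary margin-based formulation set up at the start of Section~\ref{sec:single-expert}. I would first record the dictionary: reparametrizing the multi-class deferral score by the single function $r \colon \sX \to \Rset$ playing the role of $r(\cdot, 0) - r(\cdot, 1)$ — equivalently, embedding via $r(x, 0) = r(x)$, $r(x, 1) = 0$ with $\argmax$ ties broken toward the larger index, so that $\rr(x) = 0 \iff r(x) > 0$ — one gets the exact identifications $\ldef = \ell_{\rm{def}}$ and $L^h_{\ell} = \ell^h_{\Phi}$ under the choice $\ell(r, x, 0) = \Phi(r(x))$, $\ell(r, x, 1) = \Phi(-r(x))$; the multi-class zero-one loss $\ell_{0-1}$ at $\num = 1$ becomes $\ell^{\rm{bi}}_{0-1}$ (with the action identification: binary $+1 \leftrightarrow$ label $0$, binary $-1 \leftrightarrow$ label $1$); the condition that $\Phi$ be non-increasing and upper bound $u \mapsto 1_{u \leq 0}$ becomes the condition $\ell \geq \ell_{0-1}$; and a set of functions $\sX \to \Rset$ is regular in the binary sense iff it is regular in the multi-class sense at $\num = 1$. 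Under this dictionary the hypothesis of Theorem~\ref{thm:tshr-binary} is exactly the hypothesis of Theorem~\ref{thm:tshr} at $\num = 1$, so invoking Theorem~\ref{thm:tshr} with $\num = 1$ and $\uc_1 = \uc$ yields the asserted $(\sH, \sR)$-consistency bound, its transformation $\ov \Gamma(t) = \paren*{\num\paren*{\ul + \sum_{j = 1}^{\num}\uc_j}}^{1 - \alpha}\beta\, t^{\alpha}$ collapsing to $\ov\Gamma(t) = \paren*{\ul + \uc}^{1 - \alpha}\beta\, t^{\alpha}$, as stated.

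For completeness — this is the content packaged inside Theorem~\ref{thm:tshr} — one can also prove the statement directly in two steps. \emph{First stage.} Writing $\bar a_{h'}(x) = \E_{y \mid x}\bracket*{\sfL(h'(x), y)}$ and $\bar b(x) = \E_{y \mid x}\bracket*{c(x, y)}$, the conditional risk of $\ell_{\rm{def}}(h, \cdot, x, \cdot)$ equals $\bar a_{h}(x) 1_{r(x) > 0} + \bar b(x) 1_{r(x) \leq 0}$, whose pointwise infimum over $h' \in \sH$, $r \in \sR$ is $\min\paren*{\inf_{h'} \bar a_{h'}(x), \bar b(x)}$ by regularity of $\sR$; since $\ell_{\rm{def}}(h, r, x, y) = \ell^h_{\rm{def}}(r, x, y)$ for the fixed $h$, the elementary bound $\min(a, c) - \min(b, c) \leq a - b$ valid for $a \geq b$ gives, after integrating over $x$,
\begin{align*}
\sE_{\ell_{\rm{def}}}(h, r) - \sE^*_{\ell_{\rm{def}}}(\sH, \sR) + \sM_{\ell_{\rm{def}}}(\sH, \sR)
&\leq \bracket*{\sE_{\ell^h_{\rm{def}}}(r) - \sE^*_{\ell^h_{\rm{def}}}(\sR) + \sM_{\ell^h_{\rm{def}}}(\sR)}\\
&\quad + \bracket*{\sE_{\sfL}(h) - \sE_{\sfL}(\sH) + \sM_{\sfL}(\sH)}.
\end{align*}
\emph{Second stage.} Normalizing the conditional risks of $\ell^h_{\rm{def}}$ and $\ell^h_{\Phi}$ at $x$ by $\bar a_h(x) + \bar b(x) > 0$ turns them into the conditional risks of the binary zero-one and the $\Phi$-margin loss at $x$, with class probability $\eta(x) = \bar a_h(x)/(\bar a_h(x) + \bar b(x))$ (the conditional probability that deferral is the better action); applying the assumed $\sR$-consistency bound pointwise, using $\bar a_h(x) + \bar b(x) \leq \ul + \uc$ and $1 - \alpha \geq 0$, gives the pointwise inequality between the corresponding conditional regrets with constant $(\ul + \uc)^{1 - \alpha}\beta$, and integrating over $x$ together with Jensen's inequality (concavity of $t \mapsto t^{\alpha}$) produces the bound $\sE_{\ell^h_{\rm{def}}}(r) - \sE^*_{\ell^h_{\rm{def}}}(\sR) + \sM_{\ell^h_{\rm{def}}}(\sR) \leq \ov \Gamma\paren*{\sE_{\ell^h_{\Phi}}(r) - \sE^*_{\ell^h_{\Phi}}(\sR) + \sM_{\ell^h_{\Phi}}(\sR)}$ with $\ov\Gamma(t) = \paren*{\ul + \uc}^{1 - \alpha}\beta\, t^{\alpha}$. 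Chaining the two stages gives the theorem.

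The main obstacle is bookkeeping rather than conceptual: verifying that the single-expert dictionary is faithful, so that $\ldef$, $L^h_{\ell}$, $\ell_{0-1}$ and the notion of regularity all transfer exactly under the chosen $\argmax$/$\sign$ tie-breaking and label conventions, and hence that both the hypothesis and the conclusion of Theorem~\ref{thm:tshr} specialize verbatim (the only nontrivial case, the tie $r(x) = 0$, is harmless because $\Phi(0) \geq 1$). If one instead runs the direct argument, the delicate point is the second-stage normalization: one must check $\bar a_h(x) + \bar b(x) > 0$ (true since $c > 0$), dispatch the degenerate case where the conditional $\Phi$-regret at $x$ vanishes, and use $1 - \alpha \geq 0$ in the correct direction when replacing $\bar a_h(x) + \bar b(x)$ by its upper bound $\ul + \uc$.
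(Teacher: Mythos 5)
Your proposal is correct and matches the paper's own treatment: the paper gives no separate proof of Theorem~\ref{thm:tshr-binary}, stating only that it is obtained by reducing Theorem~\ref{thm:tshr} to the single-expert case $\num = 1$ under the identification $\ell(r, x, 0) = \Phi(r(x))$, $\ell(r, x, 1) = \Phi(-r(x))$, which is exactly the dictionary you spell out. Your supplementary direct argument (first-stage $\min(a,c) - \min(b,c) \leq a - b$ decomposition, then pointwise normalization of conditional risks and Jensen) also mirrors the paper's proof of the general Theorem~\ref{thm:tshr} in Appendix~\ref{app:tshr}.
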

 As before, when $\sH$ and $\sR$ include all measurable functions, all the minimizability gap terms in Theorem~\ref{thm:tshr} vanish. In particular, $\Gamma(t)$ can be expressed as $\frac{t^2}{2}$ for exponential and logistic loss, $t^2$ for quadratic loss and $t$ for hinge, sigmoid and $\rho$-margin losses \citep{awasthi2022h}. Thus, we obtain the following result.
\begin{corollary}
\label{cor:tshr-binary}
Given a margin-based loss function $\Phi$. Assume that there exists a function $\Gamma(t) = \beta\, t^{\alpha}$ for some $\alpha \in (0, 1]$ and $\beta > 0$, such that the following excess error bound holds for all $r \in \sR_{\rm{all}}$ and any distribution,
\begin{equation*}
\sE_{\ell^{\rm{bi}}_{0-1}}(r) - \sE^*_{\ell^{\rm{bi}}_{0-1}}(\sR_{\rm{all}}) \leq \Gamma\paren*{\sE_{\Phi}(r) - \sE^*_{\Phi}(\sR_{\rm{all}})}.
\end{equation*}
Then, the following excess error bound holds for all $h\in \sH_{\rm{all}}$, $r\in \sR_{\rm{all}}$ and any distribution,
\begin{equation}
\begin{aligned}
   \sE_{\ell_{\rm{def}}}(h, r) - \sE_{\ell_{\rm{def}}}^*(\sH_{\rm{all}},\sR_{\rm{all}}) \leq \sE_{\sfL}(h) - \sE_{\sfL}(\sH_{\rm{all}}) + \ov \Gamma\paren*{\sE_{\ell^h_{\Phi}}(r) -  \sE_{\ell^h_{\Phi}}^*(\sR_{\rm{all}})},
\end{aligned}
\end{equation}
where $\ov \Gamma(t) = \paren*{\ul + \uc}^{1 - \alpha} \beta\, t^{\alpha}$. In particular, $\ov \Gamma (t) = \frac12 \paren{\ul + \uc}^{\frac12} t^{\frac 12}$ for $\Phi = \Phi_{\rm{exp}}$ and $\Phi_{\rm{log}}$, $\ov \Gamma (t) = \paren{\ul + \uc}^{\frac12} t^{\frac 12}$ for $\Phi = \Phi_{\rm{quad}}$, and $\ov \Gamma (t) = t$ for $\Phi = \Phi_{\rm{hinge}}$, $\Phi_{\rm{sig}}$, and $\Phi_{\rho}$.
\end{corollary}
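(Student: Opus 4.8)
The plan is to derive Corollary~\ref{cor:tshr-binary} directly from Theorem~\ref{thm:tshr-binary} by specializing it to $\sH = \sH_{\rm{all}}$ and $\sR = \sR_{\rm{all}}$. First I would check that the hypothesis assumed in the corollary is precisely the hypothesis of Theorem~\ref{thm:tshr-binary} in this limit. Over $\sR_{\rm{all}}$ the pointwise conditional risks of both $\ell^{\rm{bi}}_{0-1}$ and $\Phi$ are attained by measurable functions, so $\sM_{\ell^{\rm{bi}}_{0-1}}(\sR_{\rm{all}}) = 0$ and $\sM_{\Phi}(\sR_{\rm{all}}) = 0$; hence the $\sR$-consistency bound assumed in the theorem collapses exactly to the excess error bound $\sE_{\ell^{\rm{bi}}_{0-1}}(r) - \sE^*_{\ell^{\rm{bi}}_{0-1}}(\sR_{\rm{all}}) \leq \Gamma\paren*{\sE_{\Phi}(r) - \sE^*_{\Phi}(\sR_{\rm{all}})}$ assumed in the corollary. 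I would also note that $\sR_{\rm{all}}$ is regular in the sense defined for the single-expert setting, since for every $x$ there are measurable functions taking a positive and a negative value at $x$, so the regularity requirement of Theorem~\ref{thm:tshr-binary} is met.

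Having matched the hypotheses, the next step is to invoke Theorem~\ref{thm:tshr-binary} with $\sH = \sH_{\rm{all}}$ and $\sR = \sR_{\rm{all}}$ and simplify its conclusion: over the family of all measurable functions $\sM_{\ell_{\rm{def}}}(\sH_{\rm{all}}, \sR_{\rm{all}}) = 0$, $\sM_{\sfL}(\sH_{\rm{all}}) = 0$, and $\sM_{\ell^h_{\Phi}}(\sR_{\rm{all}}) = 0$, so the theorem's inequality becomes exactly the excess error bound stated in the corollary, with $\ov \Gamma(t) = \paren*{\ul + \uc}^{1-\alpha}\beta\, t^{\alpha}$. For the ``in particular'' clause I would then substitute into this $\ov \Gamma$ the explicit values of $\alpha$ and $\beta$ for each standard margin-based loss, as established by \citet{awasthi2022h} and already used in Corollary~\ref{cor:single-binary}: $(\alpha, \beta) = \paren*{\frac12, \frac12}$ for $\Phi_{\rm{exp}}$ and $\Phi_{\rm{log}}$, yielding $\ov \Gamma(t) = \frac12 \paren*{\ul + \uc}^{\frac12} t^{\frac12}$; $(\alpha, \beta) = \paren*{\frac12, 1}$ for $\Phi_{\rm{quad}}$, yielding $\ov \Gamma(t) = \paren*{\ul + \uc}^{\frac12} t^{\frac12}$; and $(\alpha, \beta) = (1, 1)$ for $\Phi_{\rm{hinge}}$, $\Phi_{\rm{sig}}$, and $\Phi_{\rho}$, yielding $\ov \Gamma(t) = t$. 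This is the only place where even a short computation is needed.

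Since the statement is an immediate specialization, there is essentially no obstacle at the level of the corollary; the substantive work is contained in Theorem~\ref{thm:tshr-binary} itself (equivalently, Theorem~\ref{thm:tshr} with $\num = 1$), whose proof must control the deferral excess error by splitting it into a first-stage regression contribution $\sE_{\sfL}(h) - \sE_{\sfL}(\sH)$ and a second-stage deferral contribution, and then apply the assumed $\sR$-consistency bound for $\Phi$ to the latter via the binary reduction of the two-stage loss. The one point requiring genuine care here is the verification that all three minimizability gaps vanish over $\sH_{\rm{all}} \times \sR_{\rm{all}}$ and that the regularity and hypothesis-form conditions of Theorem~\ref{thm:tshr-binary} are actually met in this limit, which is what licenses the reduction of the assumed $\sR$-consistency bound to the assumed excess error bound.
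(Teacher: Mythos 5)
Your proposal matches the paper's own derivation: the paper obtains this corollary exactly by specializing Theorem~\ref{thm:tshr-binary} to $\sH_{\rm{all}}$ and $\sR_{\rm{all}}$, observing that all minimizability gaps vanish over the family of all measurable functions (so the assumed $\sR$-consistency bound reduces to the assumed excess error bound and the conclusion reduces to the stated one), and then substituting the known $(\alpha, \beta)$ values from \citet{awasthi2022h} for each margin-based loss. Your additional checks---that $\sR_{\rm{all}}$ is regular and that the gaps vanish because the pointwise conditional risks are attained---are correct and only make explicit what the paper asserts in the sentence preceding the corollary.
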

Corollary~\ref{cor:tshr-binary} shows that the proposed two-stage approach admits an excess error bound with respect to $\ell_{\rm{def}}$ when $\Phi$ admits an excess error bound with respect to $\ell_{0-1}$. More generally, in the cases where the minimizability gaps are zero (as when the best-in-class errors coincide with the Bayes errors), the $(\sH, \sR)$-consistency bound in Theorem~\ref{thm:tshr-binary} guarantees that when the surrogate estimation error in the first stage $\sE_{\sfL}(h) - \sE_{\sfL}(\sH)$ is minimized up to $\e_1$ and the surrogate estimation error in the second stage $\sE_{\ell^h_{\Phi}}(r) -  \sE_{\ell^h_{\Phi}}^*(\sR)$ is minimized up to $\e_2$, the target estimation error $ \sE_{\ell_{\rm{def}}}(h, r) - \sE_{\ell_{\rm{def}}}^*(\sH,\sR)$ is upper bounded by $\e_1 + \ov \Gamma(\e_2)$.

\begin{table*}[t]
  \caption{System MSE of deferral with multiple experts:
    mean $\pm$ standard deviation over three runs.}
 \label{tab:deferral-1}
 \centering
 \begin{tabular}{@{\hspace{0cm}}lllllll@{\hspace{0cm}}}
  \toprule
 Dataset & Base cost & Method & Base model & Single expert & Two experts & Three experts \\
  \midrule
  \multirow{4}{*}{\texttt{Airfoil}} & \xmark & Single & ---              & $18.98 \pm 2.44$ & $13.16 \pm 0.93$ & \bm{$8.53 \pm 1.57$} \\
                           & \xmark & Two    & $23.35 \pm 1.90$ & $18.64 \pm 1.96$ & $13.33 \pm 0.92 $ & \bm{$8.81 \pm 1.56$} \\
                           & \cmark & Single & ---              & $18.83 \pm 2.14$ & $13.79 \pm 0.75 $ & \bm{$8.64 \pm 1.40$} \\
                           & \cmark & Two    & $23.35 \pm 1.90$ & $19.15 \pm 1.99$ & $15.12 \pm 0.62 $ & \bm{$10.06 \pm 1.54$}\\
  \midrule
  \multirow{4}{*}{\texttt{Housing}} & \xmark & Single & ---              & $14.85 \pm 5.40$ & $14.75 \pm 3.53$ & \bm{$12.43 \pm 2.03$} \\
                           & \xmark & Two    & $22.72 \pm 7.68$ & $16.26 \pm 5.58$ & $14.82 \pm 3.60$ & \bm{$12.02 \pm 1.97$} \\
                           & \cmark & Single & ---              & $15.17 \pm 5.18$ & $15.07 \pm 2.88$ & \bm{$14.80 \pm 3.48$} \\
                           & \cmark & Two    & $22.72 \pm 7.68$ & $16.24 \pm 4.64$ & $15.62 \pm 3.04$ & \bm{$14.87 \pm 4.04$} \\
  \midrule
  \multirow{4}{*}{\texttt{Concrete}}& \xmark & Single & ---              & $104.38 \pm 5.55$ & $41.08 \pm 2.05$ & \bm{$37.83 \pm 2.60$} \\
                           & \xmark & Two    & $120.20 \pm 8.09$& $114.73 \pm 6.50$ & $44.46 \pm 5.34$ & \bm{$36.75 \pm 1.76$} \\
                           & \cmark & Single & ---              & $105.01 \pm 5.40$ & $39.52 \pm 2.81$ & \bm{$38.46 \pm 1.79$} \\
                           & \cmark & Two    & $120.20 \pm 8.09$& $114.11 \pm 5.34$ & $39.93 \pm 2.77$ & \bm{$37.51 \pm 2.32$} \\
  \bottomrule
 \end{tabular}
\end{table*}

\section{Experiments}
\label{sec:experiments}

In this section, we report the empirical results for our single-stage and two-stage algorithms for regression with deferral on three datasets from the UCI machine learning repository \citep{asuncion2007uci}, the \texttt{Airfoil}, \texttt{Housing} and \texttt{Concrete}, which have also been studied in \citep{cheng2023regression}.  Since our work is the first to study regression with multi-expert deferral, there are no existing baselines to compare with.

\textbf{Setup and Metrics.} 
For each dataset, we randomly split it into a training set of 60\% examples, a validation set of 20\% examples and a test set of 20\% examples. We report results averaged over three such random splits. We adopted linear models for both the predictor $h$ and the deferral function $r$. We considered three experts
$g_1$, $g_2$ and $g_3$, each trained by feedforward neural networks
with ReLU activation functions \citep{nair2010rectified}
with one, two, and three hidden layers, respectively.
We used the Adam optimizer
\citep{kingma2014adam} with a batch size of $256$ and $2\mathord,000$ training epochs. The learning rate for all datasets is selected from $\curl*{0.01, 0.05, 0.1}$.
We adopted the squared loss as the regression loss ($\sfL = \sfL_2$).
For our single-stage surrogate loss \eqref{eq:sur} and two-stage surrogate loss \eqref{eq:two-stage-sur}, we choose $\ell = \ell_{\rm{log}}$ as the logistic loss. In the experiments, we considered two types of costs: $c_j(x, y) = L(g_j(x), y)$ and $c_j(x, y) = L(g_j(x), y) + \alpha_j$, for $1 \leq j \leq \num$. In the first case, the cost corresponds exactly to the expert's squared error. In the second case, the constant $\alpha_j$ is the base cost for deferring to expert $g_j$. We chose $(\alpha_1, \alpha_2, \alpha_3) = (4.0, 8.0, 12.0)$.
For evaluation, we compute the system mean squared error (MSE), that is the average squared difference between the target value and the prediction made by the predictor $h$ or the expert selected by the deferral function $r$.\ignore{ $\frac{\sum_{i = 1}^n \ldef(h, r, x_i, y_i)}{n}$ and the deferral ratio to the $j$th expert:
$\frac{\sum_{i = 1}^n 1_{\rr(x) = j}}{n}$} We also report the empirical regression loss, $\frac{1}{n}\sum_{i = 1}^n \sfL(h(x_i), y_i)$, of the base model used in the two-stage algorithm.

\textbf{Results.}
In 
Table~\ref{tab:deferral-1}, we report the mean and standard deviation of the empirical regression loss of the base model, as well as the System MSE obtained by using a single expert $g_1$, two experts $g_1$ and $g_2$ and three experts $g_1$, $g_2$ and $g_3$, over three random splits of the dataset.

Table~\ref{tab:deferral-1} shows that the performance of both our single-stage and two-stage algorithms improves as more experts are taken into account across the \texttt{Airfoil}, \texttt{Housing} and \texttt{Concrete} datasets. In particular, our algorithms are able to effectively defer difficult test instances to more suitable experts and outperform the base model. Table~\ref{tab:deferral-1} also shows that the two-stage algorithm usually does not perform as well as the single-stage one in the regression setting, mainly due to the error accumulation in the two-stage process, particularly if the first-stage predictor has large errors. However, the two-stage algorithm is still useful when an existing predictor cannot be retrained due to cost or time constraints. In such cases, we can still improve its performance by learning a multi-expert deferral with our two-stage surrogate losses.


\bibliography{regdef}

\newpage
\appendix
\onecolumn

\renewcommand{\contentsname}{Contents of Appendix}
\tableofcontents
\addtocontents{toc}{\protect\setcounter{tocdepth}{4}} 
\clearpage

\section{Useful lemmas}

\Ldef*

\begin{proof}
  Observe that, for any $x \in \sX$, since $\rr(x) = 0$ if and only if
  $\rr(x) \neq j$ for all $j \geq 1$, the following equality holds:
\[
1_{\rr(x) = 0}
= 1_{\bigwedge_{j = 1}^{n_e} \curl*{\rr(x) \neq j}}
= \sum_{j = 1}^{n_e} 1_{\rr(x) \neq j} - (n_e - 1).
\]
Similarly, since $\rr(x) = j$ if and only if
  $\rr(x) \neq k$ for $k \neq j$ and $\rr(x) \neq 0$, the following equality holds:
\[
1_{\rr(x) = j} = 1_{\rr(x) \neq 0} + \sum_{k = 1}^{n_e} 1_{\rr(x) \neq k} 1_{k \neq j} - (n_e - 1).
\]
In view of these identities, starting from the definition of $\ldef$, we
can write:
\begin{align*}
  & \ldef(h, r, x, y)\\
  & = \sfL(h(x), y) 1_{\rr(x) = 0} + \sum_{j = 1}^{\num} c_j(x, y) 1_{\rr(x) = j}\\
  & = \sfL(h(x), y) \bracket*{\sum_{j = 1}^{n_e} 1_{\rr(x) \neq j} - (n_e - 1)}
  + \sum_{j = 1}^{\num} c_j(x, y) \bracket*{1_{\rr(x) \neq 0} + \sum_{k = 1}^{n_e} 1_{\rr(x) \neq k} 1_{k \neq j} - (n_e - 1)}\\
  & = \bracket*{\sum_{j = 1}^{\num} c_j(x,y)} 1_{\rr(x) \neq 0}
  + \sum_{j = 1}^{\num} \sfL(h(x), y) 1_{\rr(x) \neq j}
  + \sum_{j = 1}^{\num} \sum_{k = 1}^{\num} c_j(x, y) 1_{k \neq j} 1_{\rr(x) \neq k}\\
  & \quad - \paren*{\num - 1} \bracket*{\sfL(h(x), y) + \sum_{j = 1}^{\num} c_j(x, y)}\\
  & = \bracket*{\sum_{j = 1}^{\num} c_j(x,y)} 1_{\rr(x) \neq 0}
  + \sum_{j = 1}^{\num} \sfL(h(x), y) 1_{\rr(x) \neq j}
  + \sum_{k = 1}^{\num} \sum_{j = 1}^{\num} c_k(x, y) 1_{k \neq j} 1_{\rr(x) \neq j}\\
  & \quad - \paren*{\num - 1} \bracket*{\sfL(h(x), y) + \sum_{j = 1}^{\num} c_j(x, y)}
    \tag{change of variables $k$ and $j$}\\
  & = \bracket*{\sum_{j = 1}^{\num} c_j(x,y)} 1_{\rr(x) \neq 0} 
  + \sum_{j = 1}^{\num} \bracket*{\sfL(h(x), y)
    + \sum_{k = 1}^{\num} c_k(x, y) 1_{k \neq j}} 1_{\rr(x) \neq j}
  - \paren*{\num - 1} \bracket*{\sfL(h(x), y) + \sum_{j = 1}^{\num} c_j(x, y)}.
\end{align*}
This completes the proof.
\end{proof}

\begin{lemma}
\label{lemma:aux}
Assume that the following $\sR$-consistency bound holds for all $r \in \sR$ and any distribution,
\begin{equation*}
\sE_{\ell_{0-1}}(r) - \sE^*_{\ell_{0-1}}(\sR) + \sM_{\ell_{0-1}}(\sR) \leq \Gamma\paren*{\sE_{\ell}(r) - \sE^*_{\ell}(\sR) + \sM_{\ell}(\sR)}.
\end{equation*}
Then, for any $p = (p_0, \ldots, p_{\num})\in \Delta^{\num}$ and $x \in \sX$, we have
\begin{align*}
\sum_{j = 0}^{\num} p_j 1_{\rr(x) \neq j} - \inf_{r \in \sR} \paren*{\sum_{j = 0}^{\num} p_j 1_{\rr(x) \neq j}} \leq \Gamma\paren*{\sum_{j = 0}^{\num} p_j \ell(r, x, j) - \inf_{r \in \sR} \paren*{\sum_{j = 0}^{\num} p_j \ell(r, x, j) }}.
\end{align*}
\end{lemma}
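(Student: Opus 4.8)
The plan is to obtain the pointwise inequality by instantiating the assumed distribution-level $\sR$-consistency bound at a carefully chosen distribution, namely a point mass. Fix $p = (p_0, \ldots, p_{\num}) \in \Delta^{\num}$, a point $x_0 \in \sX$, and an arbitrary $r \in \sR$. Let $\sD_{x_0, p}$ be the distribution over $\sX \times \curl*{0, 1, \ldots, \num}$ whose $\sX$-marginal is the Dirac measure at $x_0$ and whose conditional label distribution at $x_0$ is $p$, i.e. $\Pr_{(x, y) \sim \sD_{x_0, p}}[y = j \mid x = x_0] = p_j$ for all $j$. Since the hypothesis of the lemma is assumed to hold for \emph{any} distribution and \emph{all} hypotheses, it applies in particular to $\sD_{x_0, p}$ and to this $r$.

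First I would evaluate the three terms on each side of the assumed bound under $\sD_{x_0, p}$, for a generic loss $L \in \curl*{\ell_{0-1}, \ell}$. Because the $\sX$-marginal is concentrated at $x_0$, taking $\E_x$ is the identity, so: the generalization error collapses to the conditional risk, $\sE_L(r) = \sum_{j = 0}^{\num} p_j L(r, x_0, j)$; the best-in-class error collapses to $\sE^*_L(\sR) = \inf_{r' \in \sR} \sum_{j = 0}^{\num} p_j L(r', x_0, j)$, which is exactly the pointwise infimum appearing in the statement (here one uses that $L(r', x_0, j)$ depends on $r'$ only through its values at $x_0$, so this infimum does not change when we restrict to $\sD_{x_0, p}$); and the minimizability gap vanishes, $\sM_L(\sR) = \sE^*_L(\sR) - \E_x \bracket*{\inf_{r' \in \sR} \E_{y \mid x}[L(r', x, y)]} = \sE^*_L(\sR) - \inf_{r' \in \sR} \sum_{j = 0}^{\num} p_j L(r', x_0, j) = 0$. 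Substituting these evaluations with $L = \ell_{0-1}$ on the left-hand side and $L = \ell$ inside $\Gamma$ on the right-hand side, and using $\ell_{0-1}(r, x_0, j) = 1_{\rr(x_0) \neq j}$, turns the assumed bound into precisely the claimed inequality at $x_0$; since $r \in \sR$, $x_0 \in \sX$ and $p \in \Delta^{\num}$ were arbitrary, this finishes the argument.

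The only delicate point — more bookkeeping than obstacle — is the treatment of the minimizability-gap terms: one must check that $\E_x \bracket*{\inf_{r' \in \sR} \E_{y \mid x}[L(r', x, y)]}$ collapses to the same pointwise infimum $\inf_{r' \in \sR} \sum_j p_j L(r', x_0, j)$ as $\sE^*_L(\sR)$ under the point-mass marginal, so that $\sM_{\ell_{0-1}}(\sR) = \sM_{\ell}(\sR) = 0$, and that this is exactly the quantity matching the infimum in the statement. A secondary, routine point is to make sure point masses are admissible under ``any distribution'' in the hypothesis; if one prefers to work only with, say, non-degenerate distributions, one replaces $\sD_{x_0, p}$ by a sequence of distributions concentrating on shrinking neighborhoods of $x_0$ and passes to the limit, using continuity of $\Gamma(t) = \beta t^{\alpha}$. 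Apart from this, the proof is a direct substitution into the assumed $\sR$-consistency bound.
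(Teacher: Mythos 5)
Your proposal is correct and is essentially the paper's own proof: the authors likewise instantiate the assumed $\sR$-consistency bound at the point-mass distribution $\delta_x$ with conditional label distribution $p$, under which $\sE_L(r) - \sE^*_L(\sR) + \sM_L(\sR)$ collapses to the pointwise conditional regret for both $\ell_{0-1}$ and $\ell$. Your extra bookkeeping (checking $\sM_{\ell_{0-1}}(\sR) = \sM_{\ell}(\sR) = 0$ under the Dirac marginal) is exactly the observation the paper leaves implicit in its ``by definition'' step.
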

\begin{proof}
For any $x \in \sX$, consider a distribution $\delta_{x}$ that concentrates on that point. Let $p_j = \mathbb{P}(y = j \mid x)$, $j \in [\num]$. Then, by definition, $\sE_{\ell_{0-1}}(r) - \sE^*_{\ell_{0-1}}(\sR) + \sM_{\ell_{0-1}}(\sR)$ can be expressed as 
\begin{equation*}
\sE_{\ell_{0-1}}(r) - \sE^*_{\ell_{0-1}}(\sR) + \sM_{\ell_{0-1}}(\sR) = \sum_{j = 0}^{\num} p_j 1_{\rr(x) \neq j} - \inf_{r \in \sR} \paren*{\sum_{j = 0}^{\num} p_j 1_{\rr(x) \neq j}}.
\end{equation*}
Similarly, $\sE_{\ell}(r) - \sE^*_{\ell}(\sR) + \sM_{\ell}(\sR)$ can be expressed as
\begin{equation*}
\sE_{\ell}(r) - \sE^*_{\ell}(\sR) + \sM_{\ell}(\sR) = \sum_{j = 0}^{\num} p_j \ell(r, x, j) - \inf_{r \in \sR} \paren*{\sum_{j = 0}^{\num} p_j \ell(r, x, j) }.
\end{equation*}
Since the $\sR$-consistency bound holds by the assumption, we complete the proof.
\end{proof}

\section{Proof of Theorem~\ref{thm:single}}
\label{app:exp}

\Single*
\begin{proof}
The conditional error of the deferral loss can be expressed as
\begin{equation}
\label{eq:cond-error}
\begin{aligned}
\E_{y | x}\bracket*{\ldef(h, r, x, y)} =
\E_{y | x}\bracket*{\sfL(h(x), y)} 1_{\rr(x) = 0} + \sum_{j = 1}^{\num} \E_{y | x}\bracket*{c_j(x,y)} 1_{\rr(x) = j}.
\end{aligned}
\end{equation}
Let $\ov c_0(x) = \inf_{h\in \sH}\E_{y | x}\bracket*{\sfL(h(x), y)}$ and $\ov c_j(x) = \E_{y | x}\bracket*{c(x, y)}$.
Thus, the best-in class conditional error of the deferral loss can be expressed as
\begin{equation}
\label{eq:best-cond-error}
\inf_{h\in \sH, r\in \sR}\E_{y | x}\bracket*{\ldef(h, r, x, y)} =  \min_{j \in [\num]} \ov c_j(x).
\end{equation}
The conditional error of the surrogate loss can be expressed as
\begin{equation}
\label{eq:cond-error-sur}
\begin{aligned}
\E_{y | x}\bracket*{\ell_{\ell}(h, r, x, y)} & =
\paren*{\sum_{j = 1}^{\num} \E_{y | x}\bracket*{c_j(x,y)}} \ell(r, x, 0)
+ \sum_{j = 1}^{\num} \paren*{\E_{y | x}\bracket*{\sfL(h(x), y)} + \sum_{j' \neq j}^{\num} \E_{y | x}\bracket*{c_{j'}(x,y)}} \ell(r, x, j)\\
& \quad - \paren*{\num - 1} \E_{y | x}\bracket*{\sfL(h(x), y)}.
\end{aligned}
\end{equation}
Note that the coefficient of term $\E_{y | x}\bracket*{\sfL(h(x), y)}$ satisfies $\sum_{j = 1}^{\num}  \ell(r, x, j) - \paren*{\num - 1} \geq 0$ since $\ell \geq \ell_{0-1}$.
Thus, the best-in class conditional error of the surrogate loss can be expressed as
\begin{equation}
\label{eq:best-cond-error-sur}
\begin{aligned}
& \inf_{h\in \sH,r\in \sR}\E_{y | x}\bracket*{L_{\ell}(h, r, x, y)}\\
& = \inf_{r \in \sR}\bracket*{\paren*{\sum_{j = 1}^{\num} \ov c_j(x)} \ell(r, x, 0)
+ \sum_{j = 1}^{\num} \paren*{\ov c_0(x) + \sum_{j' \neq j}^{\num} \ov c_{j'}(x)} \ell(r, x, j)} - \paren*{\num - 1} \ov c_0(x).
\end{aligned}
\end{equation}
Next, we analyze four cases separately to show that the calibration gap of the surrogate loss can be lower bounded by that of the deferral loss.
\paragraph{Case I: \texorpdfstring{$\rr(x) = 0$}{I} and \texorpdfstring{$\ov c_0(x) \leq \min_{j = 1}^{\num} \ov c_j(x)$}{I}.} In this case, by \eqref{eq:cond-error} and \eqref{eq:best-cond-error}, the calibration gap of the deferral loss can be expressed as
\begin{align*}
\E_{y | x}\bracket*{\ldef(h, r, x, y)}-\inf_{h\in \sH, r\in \sR}\E_{y | x}\bracket*{\ldef(h, r, x, y)} = \E_{y | x}\bracket*{\sfL(h(x), y)} - \inf_{h\in \sH}\E_{y | x}\bracket*{\sfL(h(x), y)}.
\end{align*}
By \eqref{eq:cond-error-sur} and \eqref{eq:best-cond-error-sur}, the calibration gap of the surrogate loss can be expressed as
\begin{align*}
&\E_{y | x}\bracket*{L_{\ell}(h, r, x, y)} - \inf_{h\in \sH, r\in \sR}\E_{y | x}\bracket*{L_{\ell}(h, r, x, y)}\\
& =  \paren*{\sum_{j = 1}^{\num} \ov c_j(x)} \ell(r, x, 0)
+ \sum_{j = 1}^{\num} \paren*{\E_{y | x}\bracket*{\sfL(h(x), y)} + \sum_{j' \neq j}^{\num} \ov c_{j'}(x)} \ell(r, x, j) - \paren*{\num - 1} \E_{y | x} \sfL(h(x), y)\\
&\quad - \inf_{r \in \sR}\bracket*{\paren*{\sum_{j = 1}^{\num} \ov c_j(x)} \ell(r, x, 0)
+ \sum_{j = 1}^{\num} \paren*{\ov c_0(x) + \sum_{j' \neq j}^{\num} \ov c_{j'}(x)} \ell(r, x, j)} + \paren*{\num - 1} \ov c_0(x).
\end{align*}
Since $\ell \geq \ell_{0-1}$, we have $\ell(r, x, j) \geq 1$ for $j \neq 0$. By eliminating the infimum over $\sR$ from the final line, and consequently canceling the terms related to $\ov c_j(x)$ for $j \neq 0$, the calibration gap of the surrogate loss can be lower bounded as
\begin{align*}
&\E_{y | x}\bracket*{L_{\ell}(h, r, x, y)} - \inf_{h\in \sH,r\in \sR}\E_{y | x}\bracket*{L_{\ell}(h, r, x, y)}\\
&\geq \paren*{\E_{y | x}\bracket*{\sfL(h(x), y)} - \inf_{h\in \sH}\E_{y | x}\bracket*{\sfL(h(x), y)}}\paren*{\sum_{j = 1}^{\num} \ell(r, x, j) - \num + 1}\\
&\geq \E_{y | x}\bracket*{\sfL(h(x), y)}- \inf_{h\in \sH}\E_{y | x}\bracket*{\sfL(h(x), y)} \tag{$\sum_{j = 1}^{\num} \ell(r, x, j) - \num + 1 \geq 1$}\\
& = \E_{y | x}\bracket*{\ldef(h, r, x, y)}-\inf_{h\in \sH,r\in \sR}\E_{y | x}\bracket*{\ldef(h, r, x, y)}.
\end{align*}

\paragraph{Case II: \texorpdfstring{$\ov c_0(x) > \min_{j = 1}^{\num} \ov c_j(x)$}{II}.} In this case, by \eqref{eq:cond-error} and \eqref{eq:best-cond-error},  the calibration gap of the deferral loss can be expressed as
\begin{align*}
\E_{y | x}\bracket*{\ldef(h, r, x, y)}-\inf_{h\in \sH,r\in \sR}\E_{y | x}\bracket*{\ldef(h, r, x, y)} = \ov c_{\rr(x)}(x) - \min_{j = 1}^{\num} \ov c_j(x).
\end{align*}
By \eqref{eq:cond-error-sur} and \eqref{eq:best-cond-error-sur}, the calibration gap of the surrogate loss can be expressed as
\begin{align*}
&\E_{y | x}\bracket*{L_{\ell}(h, r, x, y)} - \inf_{h\in \sH, r\in \sR}\E_{y | x}\bracket*{L_{\ell}(h, r, x, y)}\\
& =  \paren*{\sum_{j = 1}^{\num} \ov c_j(x)} \ell(r, x, 0)
+ \sum_{j = 1}^{\num} \paren*{\E_{y | x}\bracket*{\sfL(h(x), y)} + \sum_{j' \neq j}^{\num} \ov c_{j'}(x)} \ell(r, x, j) - \paren*{\num - 1} \E_{y | x}\bracket*{\sfL(h(x), y)}\\
&\quad - \inf_{r \in \sR}\bracket*{\paren*{\sum_{j = 1}^{\num} \ov c_j(x)} \ell(r, x, 0)
+ \sum_{j = 1}^{\num} \paren*{\ov c_0(x) + \sum_{j' \neq j}^{\num} \ov c_{j'}(x)} \ell(r, x, j)} + \paren*{\num - 1} \ov c_0(x).
\end{align*}
Using the fact that $\ov c_0(x) = \inf_{h\in \sH}\E_{y | x}\bracket*{\sfL(h(x), y)} \leq \E_{y | x}\bracket*{\sfL(h(x), y)}$, the calibration gap of the surrogate loss can be lower bounded as
\begin{align*}
& \E_{y | x}\bracket*{L_{\ell}(h, r, x, y)} - \inf_{h\in \sH,r\in \sR}\E_{y | x}\bracket*{L_{\ell}(h, r, x, y)}\\
& \geq \paren*{\sum_{j = 1}^{\num} \ov c_j(x)} \ell(r, x, 0)
+ \sum_{j = 1}^{\num} \paren*{\E_{y | x}\bracket*{\sfL(h(x), y)} + \sum_{j' \neq j}^{\num} \ov c_{j'}(x)} \ell(r, x, j)\\
&\quad - \inf_{r \in \sR}\bracket*{\paren*{\sum_{j = 1}^{\num} \ov c_j(x)} \ell(r, x, 0)
+ \sum_{j = 1}^{\num} \paren*{\E_{y | x}\bracket*{\sfL(h(x), y)} + \sum_{j' \neq j}^{\num} \ov c_{j'}(x)} \ell(r, x, j)}\\
& = \num \paren*{\E_{y | x}\bracket*{\sfL(h(x), y)} + \sum_{j = 1 }^{\num} \ov c_{j}(x)} \bracket*{\sum_{j = 0}^{\num} p_j \ell(r, x, j) - \inf_{r \in \sR} \paren*{\sum_{j = 0}^{\num} p_j \ell(r, x, j) }}
\end{align*}
where we let $p_0 = \frac{\sum_{j = 1}^{\num} \ov c_j(x)}{\num \paren*{\E_{y | x}\bracket*{\sfL(h(x), y)} + \sum_{j = 1 }^{\num} \ov c_{j}(x)}}$ and $p_j = \frac{\E_{y | x}\bracket*{\sfL(h(x), y)} + \sum_{j' \neq j}^{\num} \ov c_{j'}(x)}{\num \paren*{\E_{y | x}\bracket*{\sfL(h(x), y)} + \sum_{j = 1 }^{\num} \ov c_{j}(x)}}$, $j = \curl*{1, \ldots, \num}$ in the last equality. 
By Lemma~\ref{lemma:aux}, we have
\begin{align*}
& \sum_{j = 0}^{\num} p_j \ell(r, x, j) - \inf_{r \in \sR} \paren*{\sum_{j = 0}^{\num} p_j \ell(r, x, j) }\\
& \geq \Gamma^{-1}\paren*{\sum_{j = 0}^{\num} p_j 1_{\rr(x) \neq j} - \inf_{r \in \sR} \paren*{\sum_{j = 0}^{\num} p_j 1_{\rr(x) \neq j}}} \\
& = \Gamma^{-1}\paren*{\max_{j \in [\num]}p_j - p_{\rr(x)}}\\
& = \Gamma^{-1}\paren*{\frac{\E_{y | x}\bracket*{\sfL(h(x), y)} - \min_{j = 1}^{\num} \ov c_j(x)}{\num \paren*{\E_{y | x}\bracket*{\sfL(h(x), y)} + \sum_{j = 1 }^{\num} \ov c_{j}(x)}}}
\end{align*}
Therefore, we obtain
\begin{align*}
& \E_{y | x}\bracket*{L_{\ell}(h, r, x, y)} - \inf_{h\in \sH,r\in \sR}\E_{y | x}\bracket*{L_{\ell}(h, r, x, y)}\\
& \geq  \num \paren*{\E_{y | x}\bracket*{\sfL(h(x), y)} + \sum_{j = 1 }^{\num} \ov c_{j}(x)}  \Gamma^{-1}\paren*{\frac{\E_{y | x}\bracket*{\sfL(h(x), y)} - \min_{j = 1}^{\num} \ov c_j(x)}{\num \paren*{\E_{y | x}\bracket*{\sfL(h(x), y)} + \sum_{j = 1 }^{\num} \ov c_{j}(x)}}} \\
& \geq
\num \paren*{\E_{y | x}\bracket*{\sfL(h(x), y)} + \sum_{j = 1 }^{\num} \ov c_{j}(x)}  \Gamma^{-1}\paren*{\frac{\E_{y | x}\bracket*{\ldef(h, r, x, y)}-\inf_{h\in \sH,r\in \sR}\E_{y | x}\bracket*{\ldef(h, r, x, y)}}{\num \paren*{\E_{y | x}\bracket*{\sfL(h(x), y)} + \sum_{j = 1 }^{\num} \ov c_{j}(x)}}}\\
& \geq \frac{1}{\beta^{\frac{1}{\alpha}}} \frac{\paren*{\E_{y | x}\bracket*{\ldef(h, r, x, y)}-\inf_{h\in \sH,r\in \sR}\E_{y | x}\bracket*{\ldef(h, r, x, y)}}^{\frac{1}{\alpha}}}{\paren*{\num(\ul + \sum_{j = 1}^{\num}\uc_j)}^{\frac{1}{\alpha} - 1}}
\end{align*}
where we use the fact that $\Gamma(t) = \beta t^{\alpha}$, $\alpha \in (0, 1]$, $\beta > 0$, $\sfL \leq \ul$ and $c_j \leq \uc_j$, $j = \curl*{1, \ldots, \num}$ in the last inequality.

\paragraph{Case III: \texorpdfstring{$\rr(x) > 0$}{III} and \texorpdfstring{$\ov c_0(x) \leq \min_{j = 1}^{\num} \ov c_j(x)$}{III}.} In this case, by \eqref{eq:cond-error} and \eqref{eq:best-cond-error},  the calibration gap of the deferral loss can be expressed as
\begin{align*}
\E_{y | x}\bracket*{\ldef(h, r, x, y)}-\inf_{h\in \sH,r\in \sR}\E_{y | x}\bracket*{\ldef(h, r, x, y)} = \ov c_{\rr(x)}(x) - \ov c_0(x).
\end{align*}
By \eqref{eq:cond-error-sur} and \eqref{eq:best-cond-error-sur}, the calibration gap of the surrogate loss can be expressed as
\begin{align*}
&\E_{y | x}\bracket*{L_{\ell}(h, r, x, y)} - \inf_{h\in \sH, r\in \sR}\E_{y | x}\bracket*{L_{\ell}(h, r, x, y)}\\
& =  \paren*{\sum_{j = 1}^{\num} \ov c_j(x)} \ell(r, x, 0)
+ \sum_{j = 1}^{\num} \paren*{\E_{y | x}\bracket*{\sfL(h(x), y)} + \sum_{j' \neq j}^{\num} \ov c_{j'}(x)} \ell(r, x, j) - \paren*{\num - 1} \E_{y | x}\bracket*{\sfL(h(x), y)}\\
&\quad - \inf_{r \in \sR}\bracket*{\paren*{\sum_{j = 1}^{\num} \ov c_j(x)} \ell(r, x, 0)
+ \sum_{j = 1}^{\num} \paren*{\ov c_0(x) + \sum_{j' \neq j}^{\num} \ov c_{j'}(x)} \ell(r, x, j)} + \paren*{\num - 1} \ov c_0(x).
\end{align*}
Using the fact that $\E_{y | x}\bracket*{\sfL(h(x), y)} \geq \inf_{h\in \sH}\E_{y | x}\bracket*{\sfL(h(x), y)} = \ov c_0(x)$, the calibration gap of the surrogate loss can be lower bounded as
\begin{align*}
& \E_{y | x}\bracket*{L_{\ell}(h, r, x, y)} - \inf_{h\in \sH,r\in \sR}\E_{y | x}\bracket*{L_{\ell}(h, r, x, y)}\\
& \geq \paren*{\sum_{j = 1}^{\num} \ov c_j(x)} \ell(r, x, 0)
+ \sum_{j = 1}^{\num} \paren*{\ov c_0(x) + \sum_{j' \neq j}^{\num} \ov c_{j'}(x)} \ell(r, x, j)\\
&\quad - \inf_{r \in \sR}\bracket*{\paren*{\sum_{j = 1}^{\num} \ov c_j(x)} \ell(r, x, 0)
+ \sum_{j = 1}^{\num} \paren*{\ov c_0(x) + \sum_{j' \neq j}^{\num} \ov c_{j'}(x)} \ell(r, x, j)}\\
& = \num \paren*{\sum_{j = 0 }^{\num} \ov c_{j}(x)} \bracket*{\sum_{j = 0}^{\num} p_j \ell(r, x, j) - \inf_{r \in \sR} \paren*{\sum_{j = 0}^{\num} p_j \ell(r, x, j) }}
\end{align*}
where we let $p_0 = \frac{\sum_{j = 1}^{\num} \ov c_j(x)}{\num \paren*{\sum_{j = 0 }^{\num} \ov c_{j}(x)}}$ and $p_j = \frac{\ov c_0(x) + \sum_{j' \neq j}^{\num} \ov c_{j'}(x)}{\num \paren*{\sum_{j = 0 }^{\num} \ov c_{j}(x)}}$, $j = \curl*{1, \ldots, \num}$ in the last equality. 
By Lemma~\ref{lemma:aux}, we have
\begin{align*}
& \sum_{j = 0}^{\num} p_j \ell(r, x, j) - \inf_{r \in \sR} \paren*{\sum_{j = 0}^{\num} p_j \ell(r, x, j) }\\
& \geq \Gamma^{-1}\paren*{\sum_{j = 0}^{\num} p_j 1_{\rr(x) \neq j} - \inf_{r \in \sR} \paren*{\sum_{j = 0}^{\num} p_j 1_{\rr(x) \neq j}}} \\
& = \Gamma^{-1}\paren*{\max_{j \in [\num]}p_j - p_{\rr(x)}}\\
& = \Gamma^{-1}\paren*{\frac{\ov c_{\rr(x)}(x) - \ov c_0(x)}{\num \paren*{\sum_{j = 0 }^{\num} \ov c_{j}(x)}}}
\end{align*}
Therefore, we obtain
\begin{align*}
& \E_{y | x}\bracket*{L_{\ell}(h, r, x, y)} - \inf_{h\in \sH,r\in \sR}\E_{y | x}\bracket*{L_{\ell}(h, r, x, y)}\\
& \geq \num \paren*{\sum_{j = 0 }^{\num} \ov c_{j}(x)} \Gamma^{-1}\paren*{\frac{\ov c_{\rr(x)}(x) - \ov c_0(x)}{\num \paren*{\sum_{j = 0 }^{\num} \ov c_{j}(x)}}} \\
& = \num \paren*{\sum_{j = 0 }^{\num} \ov c_{j}(x)} \Gamma^{-1}\paren*{\frac{\E_{y | x}\bracket*{\ldef(h, r, x, y)}-\inf_{h\in \sH,r\in \sR}\E_{y | x}\bracket*{\ldef(h, r, x, y)}}{\num \paren*{\sum_{j = 0 }^{\num} \ov c_{j}(x)}}} \\
& \geq \frac{1}{\beta^{\frac{1}{\alpha}}} \frac{\paren*{\E_{y | x}\bracket*{\ldef(h, r, x, y)}-\inf_{h\in \sH,r\in \sR}\E_{y | x}\bracket*{\ldef(h, r, x, y)}}^{\frac{1}{\alpha}}}{\paren*{\num(\ul + \sum_{j = 1}^{\num}\uc_j)}^{\frac{1}{\alpha} - 1}}
\end{align*}
where we use the fact that $\Gamma(t) = \beta t^{\alpha}$, $\alpha \in (0, 1]$, $\beta > 0$, $\sfL \leq \ul$ and $c_j \leq \uc_j$, $j = \curl*{1, \ldots, \num}$ in the last inequality.

Overall, by taking the expectation of the deferral and surrogate calibration gaps and using Jensen's inequality in each case, we obtain
\begin{equation*}
   \sE_{\ldef}(h, r) - \sE_{\ldef}^*(\sH,\sR) + \sM_{\ldef}(\sH,\sR) \leq 
    \ov \Gamma\paren*{\sE_{L_{\ell}}(h, r) -  \sE_{L_{\ell}}^*(\sH,\sR) + \sM_{L_{\ell}}(\sH,\sR)}.
\end{equation*}
where $\ov \Gamma(t) = \max\curl*{t, \paren*{\num\paren*{\ul + \sum_{j = 1}^{\num}\uc_j}}^{1 - \alpha} \beta\, t^{\alpha}}$.
\end{proof}

\section{Proof of Theorem~\ref{thm:tsr}}
\label{app:tsr}

\TwostageR*
\begin{proof}
Given a hypothesis set $\sR$, a multi-class loss function $\ell$ and a predictor $h$. For any $r \in \sR$, $x \in \sX$ and $y \in \sY$, the conditional error of $L^h_{\ell}$ and $L^h_{\rm{def}}$ can be written as
\begin{equation}
\label{eq:tsr-cond-error}
\begin{aligned}
\E_{y | x}\bracket*{L^h_{\rm{def}}(r, x, y)} & =  \E_{y | x}\bracket*{\sfL(h(x), y)} 1_{\rr(x) = 0} + \sum_{j = 1}^{\num} \E_{y | x}\bracket*{c_j(x,y)} 1_{\rr(x) = j}\\
\E_{y | x}\bracket*{L^h_{\ell}(r, x, y)} & =  \paren*{\sum_{j = 1}^{\num} \E_{y | x}\bracket*{c_j(x,y)}} \ell(r, x, 0)
+ \sum_{j = 1}^{\num} \paren*{\E_{y | x}\bracket*{\sfL(h(x), y)} + \sum_{j' \neq j}^{\num} \E_{y | x}\bracket*{c_{j'}(x,y)}} \ell(r, x, j).
\end{aligned}
\end{equation}
Let $\ov c_0(x) = \inf_{h\in \sH}\E_{y | x}\bracket*{\sfL(h(x), y)}$ and $\ov c_j(x) = \E_{y | x}\bracket*{c(x, y)}$.
Thus, the best-in class conditional error of of $L^h_{\ell}$ and $L^h_{\rm{def}}$ can be expressed as
\begin{equation}
\label{eq:tsr-best-cond-error}
\begin{aligned}
\inf_{r\in \sR}\E_{y | x}\bracket*{L^h_{\rm{def}}(r, x, y)} & =  \min_{j \in [\num]} \ov c_j(x)\\
\inf_{r\in \sR}\E_{y | x}\bracket*{L^h_{\ell}(r, x, y)} & =  \inf_{r \in \sR}\bracket*{\paren*{\sum_{j = 1}^{\num} \ov c_j(x)} \ell(r, x, 0)
+ \sum_{j = 1}^{\num} \paren*{\E_{y | x}\bracket*{\sfL(h(x), y)} + \sum_{j' \neq j}^{\num} \ov c_{j'}(x)} \ell(r, x, j)}
\end{aligned}
\end{equation}
Let $p_0 = \frac{\sum_{j = 1}^{\num} \ov c_j(x)}{\num \paren*{\E_{y | x}\bracket*{\sfL(h(x), y)} + \sum_{j = 1 }^{\num} \ov c_{j}(x)}}$ and $p_j = \frac{\E_{y | x}\bracket*{\sfL(h(x), y)} + \sum_{j' \neq j}^{\num} \ov c_{j'}(x)}{\num \paren*{\E_{y | x}\bracket*{\sfL(h(x), y)} + \sum_{j = 1 }^{\num} \ov c_{j}(x)}}$, $j = \curl*{1, \ldots, \num}$. Then, the calibration gap of $L_{\ell}^h$ can be written as 
\begin{align*}
& \E_{y | x}\bracket*{L^h_{\ell}(r, x, y)} - \inf_{r\in \sR}\E_{y | x}\bracket*{L^h_{\ell}(r, x, y)}\\
& = \num \paren*{\E_{y | x}\bracket*{\sfL(h(x), y)} + \sum_{j = 1 }^{\num} \ov c_{j}(x)}  \bracket*{\sum_{j = 0}^{\num} p_j \ell(r, x, j) - \inf_{r \in \sR} \paren*{\sum_{j = 0}^{\num} p_j \ell(r, x, j) }}
\end{align*}
By Lemma~\ref{lemma:aux}, we have
\begin{align*}
\sum_{j = 0}^{\num} p_j \ell(r, x, j) - \inf_{r \in \sR} \paren*{\sum_{j = 0}^{\num} p_j \ell(r, x, j) } 
& \geq \Gamma^{-1}\paren*{\sum_{j = 0}^{\num} p_j 1_{\rr(x) \neq j} - \inf_{r \in \sR} \paren*{\sum_{j = 0}^{\num} p_j 1_{\rr(x) \neq j}} }\\
& = \Gamma^{-1}\paren*{\max_{j \in [\num]}p_j - p_{\rr(x)}}\\
& = \Gamma^{-1}\paren*{\frac{\ov c_{\rr(x)}(x) - \min_{j \in [\num]} \ov c_j(x)}{\num \paren*{\E_{y | x}\bracket*{\sfL(h(x), y)} + \sum_{j = 1 }^{\num} \ov c_{j}(x)}}}.
\end{align*}
Therefore, we obtain
\begin{align*}
& \E_{y | x}\bracket*{L_{\ell}(r, x, y)} - \inf_{r\in \sR}\E_{y | x}\bracket*{L_{\ell}(r, x, y)}\\
& \geq \num \paren*{\E_{y | x}\bracket*{\sfL(h(x), y)} + \sum_{j = 1 }^{\num} \ov c_{j}(x)} \Gamma^{-1}\paren*{\frac{\ov c_{\rr(x)}(x) - \min_{j \in [\num]} \ov c_j(x)}{\num \paren*{\E_{y | x}\bracket*{\sfL(h(x), y)} + \sum_{j = 1 }^{\num} \ov c_{j}(x)}}}\\
& \geq \frac{1}{\beta^{\frac{1}{\alpha}}} \frac{\paren*{\E_{y | x}\bracket*{L^h_{\rm{def}}(r, x, y)}-\inf_{r\in \sR}\E_{y | x}\bracket*{L^h_{\rm{def}}(r, x, y)}}^{\frac{1}{\alpha}}}{\paren*{\num(\ul + \sum_{j = 1}^{\num}\uc_j)}^{\frac{1}{\alpha} - 1}}
\end{align*}
where we use the fact that $\Gamma(t) = \beta t^{\alpha}$, $\alpha \in (0, 1]$, $\beta > 0$, $\sfL \leq \ul$ and $c_j \leq \uc_j$, $j = \curl*{1, \ldots, \num}$ in the last inequality.
Taking the expectation on both sides and using Jensen's inequality, we obtain
\begin{equation*}
   \sE_{L^h_{\rm{def}}}(r) - \sE_{L^h_{\rm{def}}}^*(\sR) + \sM_{L^h_{\rm{def}}}(\sR) \leq 
    \ov \Gamma\paren*{\sE_{L^h_{\ell}}(r) -  \sE_{L^h_{\ell}}^*(\sR) + \sM_{L^h_{\ell}}(\sR)}.
\end{equation*}
where $\ov \Gamma(t) = \paren*{\num\paren*{\ul + \sum_{j = 1}^{\num}\uc_j}}^{1 - \alpha} \beta\, t^{\alpha}$.
\end{proof}

\section{Proof of Theorem~\ref{thm:tshr}}
\label{app:tshr}

\TwostageHR*
\begin{proof}
The conditional error of the deferral loss can be expressed as
\begin{equation*}
\begin{aligned}
\E_{y | x}\bracket*{\ldef(h, r, x, y)} =
\E_{y | x}\bracket*{\sfL(h(x), y)} 1_{\rr(x) = 0} + \sum_{j = 1}^{\num} \E_{y | x}\bracket*{c_j(x,y)} 1_{\rr(x) = j}.
\end{aligned}
\end{equation*}
Let $\ov c_0(x) = \inf_{h\in \sH}\E_{y | x}\bracket*{\sfL(h(x), y)}$ and $\ov c_j(x) = \E_{y | x}\bracket*{c(x, y)}$.
Thus, the best-in class conditional error of the deferral loss can be expressed as
\begin{equation*}
\inf_{h\in \sH, r\in \sR}\E_{y | x}\bracket*{\ldef(h, r, x, y)} =  \min_{j \in [\num]} \ov c_j(x).
\end{equation*}
Thus, by introducing the term $\min\curl*{\E_{y | x}\bracket*{\sfL(h(x), y)}, \min_{j = 1}^{\num} \ov c_j(x)}$ and subsequently subtracting it after rearranging, the conditional regret of the deferral loss $\ldef$ can be written as follows
\begin{equation}
\label{eq:tshr-cond-reg-def}
\begin{aligned}
& \E_{y | x}\bracket*{\ldef(h, r, x, y)} - \inf_{h\in \sH, r\in \sR}\E_{y | x}\bracket*{\ldef(h, r, x, y)}\\
& = \E_{y | x}\bracket*{\sfL(h(x), y)} 1_{\rr(x) = 0} + \sum_{j = 1}^{\num} \E_{y | x}\bracket*{c_j(x,y)} 1_{\rr(x) = j} - \min_{j \in [\num]} \ov c_j(x)\\ 
& =  \E_{y | x}\bracket*{\sfL(h(x), y)} 1_{\rr(x) = 0} + \sum_{j = 1}^{\num} \E_{y | x}\bracket*{c_j(x,y)} 1_{\rr(x) = j} - \min_{j = 1}^{\num} \ov c_j(x) + \paren*{\min_{j = 1}^{\num} \ov c_j(x) - \min_{j \in [\num]} \ov c_j(x)}.
\end{aligned}
\end{equation}
Note that by the property of the minimum,  the second term can be upper bounded as 
\begin{align*}
\min_{j = 1}^{\num} \ov c_j(x) - \min_{j \in [\num]} \ov c_j(x) \leq \E_{y | x}\bracket*{\sfL(h(x), y)} - \inf_{h \in \sH}\E_{y | x}\bracket*{\sfL(h(x), y)}.
\end{align*}
Next, we will upper bound the first term. Note that the conditional error and the best-in class conditional error of $L^h_{\ell}$ can be expressed as
\begin{equation}
\label{eq:tshr-cond-error-sur}
\begin{aligned}
L^h_{\ell}(r, x, y) & =  \paren*{\sum_{j = 1}^{\num} \ov c_j(x)} \ell(r, x, 0)
+ \sum_{j = 1}^{\num} \paren*{\E_{y | x}\bracket*{\sfL(h(x), y)} + \sum_{j' \neq j}^{\num} \ov c_{j'}(x) } \ell(r, x, j)\\
\inf_{r\in \sR}\E_{y | x}\bracket*{L^h_{\ell}(r, x, y)} & = \inf_{r \in \sR}\bracket*{\paren*{\sum_{j = 1}^{\num} \ov c_j(x)} \ell(r, x, 0)
+ \sum_{j = 1}^{\num} \paren*{\E_{y | x}\bracket*{\sfL(h(x), y)} + \sum_{j' \neq j}^{\num} \ov c_{j'}(x)} \ell(r, x, j)}
\end{aligned}
\end{equation}
Let $p_0 = \frac{\sum_{j = 1}^{\num} \ov c_j(x)}{\num \paren*{\E_{y | x}\bracket*{\sfL(h(x), y)} + \sum_{j = 1 }^{\num} \ov c_{j}(x)}}$ and $p_j = \frac{\E_{y | x}\bracket*{\sfL(h(x), y)} + \sum_{j' \neq j}^{\num} \ov c_{j'}(x)}{\num \paren*{\E_{y | x}\bracket*{\sfL(h(x), y)} + \sum_{j = 1 }^{\num} \ov c_{j}(x)}}$, $j = \curl*{1, \ldots, \num}$. Then, the first term can be rewritten as 
\begin{align*}
& \E_{y | x}\bracket*{\sfL(h(x), y)} 1_{\rr(x) = 0} + \sum_{j = 1}^{\num} \E_{y | x}\bracket*{c_j(x,y)} 1_{\rr(x) = j} - \min_{j = 1}^{\num} \ov c_j(x) \\
& = \num \paren*{\E_{y | x}\bracket*{\sfL(h(x), y)} + \sum_{j = 1 }^{\num} \ov c_{j}(x)}  \bracket*{\sum_{j = 0}^{\num} p_j 1_{\rr(x) \neq 0} - \inf_{r \in \sR} \paren*{\sum_{j = 0}^{\num} p_j 1_{\rr(x) \neq j}}}.
\end{align*}
By Lemma~\ref{lemma:aux}, we have
\begin{align*}
\sum_{j = 0}^{\num} p_j 1_{\rr(x) \neq j} - \inf_{r \in \sR} \paren*{\sum_{j = 0}^{\num} p_j 1_{\rr(x) \neq j}} 
& \leq \Gamma\paren*{\sum_{j = 0}^{\num} p_j \ell(r, x, j) - \inf_{r \in \sR} \paren*{\sum_{j = 0}^{\num} p_j \ell(r, x, j) }}\\
& = \Gamma\paren*{\frac{L^h_{\ell}(r, x, y) - \inf_{r\in \sR}\E_{y | x}\bracket*{L^h_{\ell}(r, x, y)}}{\num \paren*{\E_{y | x}\bracket*{\sfL(h(x), y)} + \sum_{j = 1 }^{\num} \ov c_{j}(x)}}}.
\end{align*}
Therefore, the first term can be upper bounded as
\begin{align*}
&  \E_{y | x}\bracket*{\sfL(h(x), y)} 1_{\rr(x) = 0} + \sum_{j = 1}^{\num} \E_{y | x}\bracket*{c_j(x,y)} 1_{\rr(x) = j} - \min_{j = 1}^{\num} \ov c_j(x)\\
& = \num \paren*{\E_{y | x}\bracket*{\sfL(h(x), y)} + \sum_{j = 1 }^{\num} \ov c_{j}(x)}  \bracket*{\sum_{j = 0}^{\num} p_j 1_{\rr(x) \neq 0} - \inf_{r \in \sR} \paren*{\sum_{j = 0}^{\num} p_j 1_{\rr(x) \neq j}}}\\
& \leq \num \paren*{\E_{y | x}\bracket*{\sfL(h(x), y)} + \sum_{j = 1 }^{\num} \ov c_{j}(x)} \Gamma\paren*{\frac{L^h_{\ell}(r, x, y) - \inf_{r\in \sR}\E_{y | x}\bracket*{L^h_{\ell}(r, x, y)}}{\num \paren*{\E_{y | x}\bracket*{\sfL(h(x), y)} + \sum_{j = 1 }^{\num} \ov c_{j}(x)}}}\\
& \leq \paren*{\num\paren*{\ul + \sum_{j = 1}^{\num}\uc_j}}^{1 - \alpha} \beta\, \paren*{L^h_{\ell}(r, x, y) - \inf_{r\in \sR}\E_{y | x}\bracket*{L^h_{\ell}(r, x, y)}}^{\alpha}
\end{align*}
where we use the fact that $\Gamma(t) = \beta t^{\alpha}$, $\alpha \in (0, 1]$, $\beta > 0$, $\sfL \leq \ul$ and $c_j \leq \uc_j$, $j = \curl*{1, \ldots, \num}$ in the last inequality.
After upper bounding the first term and the second term in \eqref{eq:tshr-cond-reg-def} as above, taking the expectation on both sides and using Jensen's inequality, we obtain
\begin{align*}
   \sE_{\ldef}(h, r) - \sE_{\ldef}^*(\sH,\sR) + \sM_{\ldef}(\sH,\sR) & \leq \sE_{\sfL}(h) - \sE_{\sfL}(\sH) + \sM_{\sfL}(\sH)\\
   & \quad + \ov \Gamma\paren*{\sE_{L^h_{\ell}}(r) -  \sE_{L^h_{\ell}}^*(\sR) + \sM_{L^h_{\ell}}(\sR)},
\end{align*}
where $\ov \Gamma(t) = \paren*{\num\paren*{\ul + \sum_{j = 1}^{\num}\uc_j}}^{1 - \alpha} \beta\, t^{\alpha}$.
\end{proof}

\section{Common margin-based losses and corresponding deferral surrogate losses}
\label{app:sur-binary}
\begin{table}[h]
\caption{Common margin-based losses and corresponding deferral surrogate losses.}
  \label{tab:sur-binary}
  \centering
  \resizebox{\columnwidth}{!}{
  \begin{tabular}{@{\hspace{0cm}}lll@{\hspace{0cm}}}
    \toprule
      Name & $\Phi(u)$ & Deferral surrogate loss $\ell_{\Phi}$\\
    \midrule
     Exponential & $\Phi_{\rm{exp}}(u) = e^{-u}$ & $\sfL(h(x), y) e^{r(x)} + c(x,y) e^{-r(x)}$    \\
     Logistic & $\Phi_{\rm{log}}(u) = \log\paren*{1 + e^{-u}}$ & $\sfL(h(x), y) \log\paren*{1 + e^{r(x)}} + c(x,y) \log\paren*{1 + e^{-r(x)}}$ \\
     Quadratic & $\Phi_{\rm{quad}}(u) = \max\curl*{1 - u, 0}^2$ & $\sfL(h(x), y) \Phi_{\rm{quad}}(-r(x)) + c(x,y) \Phi_{\rm{quad}}(r(x))$ \\
     Hinge & $\Phi_{\rm{hinge}}(u) = \max\curl*{1 - u, 0}$ & $\sfL(h(x), y) \Phi_{\rm{hinge}}(-r(x)) + c(x,y) \Phi_{\rm{hinge}}(r(x))$ \\
     Sigmoid & $\Phi_{\rm{sig}}(u) = 1 - \tanh(k u), k > 0$ & $\sfL(h(x), y) \Phi_{\rm{sig}}(-r(x)) + c(x,y) \Phi_{\rm{sig}}(r(x))$  \\
     $\rho$-Margin & $\Phi_{\rho}(u) = \min\curl*{1,  \max\curl*{0, 1-\frac{u}{\rho}}}, \rho>0$ & $\sfL(h(x), y) \Phi_{\rho}(-r(x)) + c(x,y) \Phi_{\rho}(r(x))$ \\ 
    \bottomrule
  \end{tabular}
  }
\end{table}

\end{document}